\documentclass{article} 

\usepackage{maniedit_preprint,times}

\usepackage{hyperref}
\usepackage{url}

\usepackage[T1]{fontenc}
\usepackage[utf8]{inputenc}

\usepackage{microtype}
\usepackage{inconsolata}

\usepackage{graphicx}

\usepackage{amsmath}
\usepackage{amsthm}
\usepackage{amssymb}
\usepackage{adjustbox}
\newtheorem{proposition}{Proposition}

\newtheorem{corollary}{Corollary}
\usepackage{booktabs}
\usepackage{multirow}
\usepackage[normalem]{ulem}
\usepackage{enumitem}

\renewcommand{\cite}{\citep}

\title{ManiEdit: Sequential Unstructured Knowledge Editing for Language Models from a Manifold Perspective}

\author{Rui Liu$^{1}$ \quad
Chenheng Zhang$^{1}$ \quad
Haoxuan Li$^{2}$\thanks{Corresponding authors: Zhouchen Lin (\texttt{zlin@pku.edu.cn}) and Haoxuan Li (\texttt{hxli@pku.edu.cn}).} \quad
Zhouchen Lin$^{1,2}$\footnotemark[2] \\
\normalfont $^{1}$State Key Lab of General Artificial Intelligence,\\
School of Intelligence Science and Technology, Peking University\\
$^{2}$Institute for Artificial Intelligence, Peking University
}

\begin{document}

\setcounter{footnote}{1}
\maketitle

\vspace{-5pt}
\begin{abstract}
    Large language models (LLMs) inevitably generate some incorrect or outdated content, necessitating efficient and precise mechanisms for continual knowledge updates. However, existing model editing methods struggle to sequentially edit unstructured long-form knowledge, suffering from severe edit forgetting and degradation of general capabilities. To address these challenges, we reframe knowledge editing from a manifold perspective, viewing it as a localized displacement of an edit sub-manifold within the global knowledge manifold. Under this formulation, the problem can be decomposed into two key questions: (i) how to identify representative edit points that effectively anchor the edit sub-manifold, and (ii) how to preserve the remaining manifold structure during the sub-manifold displacement process. Based on this perspective, we propose \textbf{ManiEdit}, a novel manifold-aware autoregressive editing framework consisting of two core components. \textit{Pivot Localization} addresses the mediocre-point dilemma by identifying high-leverage pivots to anchor the edit sub-manifold. \textit{Manifold-Aware Preservation} preserves different knowledge types through an energy-weighted penalty combined with recursive null-space alignment. Experiments on two base LLMs and four unstructured editing benchmarks demonstrate that ManiEdit achieves state-of-the-art performance, outperforming the strongest baseline by up to $+27.81$ BERTScore and $+8.50$ ROUGE-L, while maintaining near-original general capabilities across six representative downstream tasks.
    Our code is available at: https://github.com/Areyliu/ManiEdit
\end{abstract}

\vspace{-10pt}
\section{Introduction}
\label{sec:intro}
\vspace{-5pt}

Large Language Models (LLMs) have demonstrated impressive capabilities in storing and recalling vast amounts of knowledge within their parameters~\cite{brown2020gpt3,petroni2019lm,roberts2020knowledge,zhao2023survey}. However, stored facts often become inaccurate or outdated due to noisy training data and temporal drift~\cite{de-cao2021editing,mitchell2022fast}. While fine-tuning on updated facts offers a straightforward remedy, it is resource-intensive and risks both overfitting to individual examples and catastrophic forgetting~\cite{mitchell2022memory,meng2023locating}. To overcome these limitations, \emph{knowledge editing} has emerged as a promising alternative that enables precise, cost-effective modification of specific target knowledge while preserving other knowledge.

Current knowledge editing approaches roughly fall into two categories: (i) \emph{parameter-modifying} methods that directly adjust a small subset of model parameters~\cite{meng2023mass,fang2025alphaedit,jiang2025anyedit}, and (ii) \emph{parameter-preserving} methods that introduce additional external modules without updating the original weights~\cite{mitchell2022memory,hartvigsen2023aging,yu2023melo}. Among the former, the dominant locate-then-edit paradigm identifies the weights storing target knowledge via causal tracing and perturbs them with a closed-form update. Representative methods such as ROME~\cite{meng2023locating} and MEMIT~\cite{meng2023mass} interpret the down-projection matrix of each feed-forward layer as a linear associative memory and derive closed-form updates for atomic subject-predicate-object triplets.

Despite their success on structured triplets, realistic deployment still faces two fundamental challenges. First, roughly 80\% of real-world knowledge is \emph{unstructured}~\cite{bavota2016mining,deng2025everything}, expressed in diverse formats (e.g., poetry, code, mathematical derivation) with rich context dependencies rather than discrete triplets, making it difficult to cleanly locate, encode, and inject target knowledge into the model. Second, knowledge updates accumulate \emph{sequentially} over time rather than arriving in a single batch, requiring continual editing without disrupting prior updates or pre-trained capabilities. This sequential setting gives rise to two critical failure modes: \textit{edit forgetting}, where each new edit partially erases previously updated knowledge, causing editing performance to decay as the sequence grows; and \textit{general-capability collapse}, where accumulated parameter drift destroys the model's linguistic and reasoning abilities within only a few hundred edits (Figure~\ref{fig:motivation}).

To solve these challenges, we reframe knowledge editing from a manifold perspective, viewing it as a localized displacement of an edit sub-manifold within the global knowledge manifold. Under this formulation, the problem can be decomposed into two key questions: (i) how to identify representative edit points that effectively anchor the edit sub-manifold, and (ii) how to preserve the remaining manifold structure during the sub-manifold displacement process, thereby achieving precise editing while preserving historical edits and general capabilities.

From this perspective, we propose \textbf{ManiEdit}, a novel manifold-aware autoregressive editing paradigm designed for sequential unstructured knowledge editing, which comprises two tightly coupled components that address the key questions above. (i) \textbf{Pivot Localization} identifies \emph{semantic pivots}---tokens with high leverage scores on the manifold periphery---and then anchors the edit sub-manifold at these pivots, improving the effectiveness of each weight update. (ii) \textbf{Manifold-Aware Preservation} decouples the preservation of two types of knowledge according to their distinct manifold characteristics: an energy-weighted penalty preserves the geometry of pre-trained representations, while a recursive null-space alignment maintains safe orthogonality to historical edits, enabling stable retention of both sequentially edited knowledge and pre-trained general capabilities. The resulting unified objective admits a unique closed-form solution. We further establish three theoretical propositions that demonstrate the advantages of each component and characterize their synergy.

To validate the effectiveness of our method, we conduct extensive experiments using two base LLMs on four unstructured benchmarks. The results show that ManiEdit achieves state-of-the-art performance, surpassing the strongest competitor by up to $+27.81$ BERTScore and $+8.50$ ROUGE-L. After 500 sequential edits, ManiEdit retains general capabilities within $1$--$2\%$ of the pre-edited model on most of the representative downstream tasks, whereas all baselines collapse within $100$--$200$ edits. Further analysis experiments, ablation studies, and long-horizon stress tests confirm the robustness and efficacy of ManiEdit.

In summary, our main contributions are:
\begin{itemize}[leftmargin=*,nosep]
    \item We are the first to reframe knowledge editing as a localized displacement of an edit sub-manifold within the global manifold, decomposing it into the dual problems of edit point identification and manifold structure preservation.
    \item We propose \textit{Pivot Localization}, which identifies high-leverage semantic pivots that anchor the edit sub-manifold, and \textit{Manifold-Aware Preservation}, which decouples the preservation of two types of knowledge by their distinct manifold properties.
    \item Extensive experiments show ManiEdit achieves state-of-the-art performance on four unstructured editing benchmarks and retains near-original general capability on six downstream tasks.
\end{itemize}

\begin{figure}[t] 
  \centering
  \includegraphics[width=0.75\linewidth]{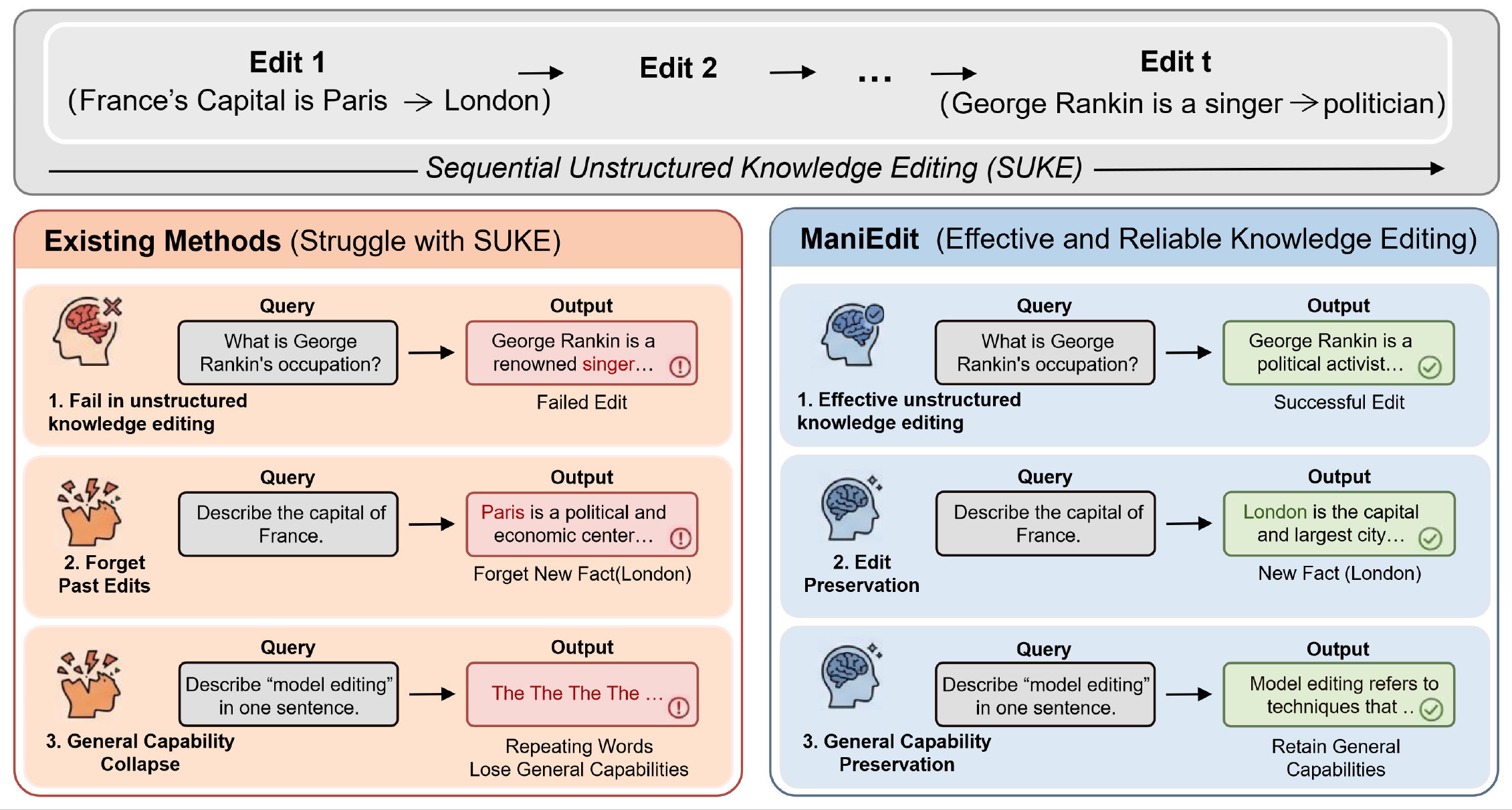}
  \caption{Motivation for ManiEdit. Existing methods struggle with sequential unstructured knowledge editing, suffering from edit forgetting and general-capability collapse (left). In contrast, ManiEdit enables effective and reliable knowledge editing, preserving both edit knowledge and pre-trained general capabilities (right).}
  \label{fig:motivation}
  \vspace{-10pt}
\end{figure}

\section{Preliminary}
\label{sec:preliminary}

An autoregressive large language model (LLM) predicts the next token $x$ in a sequence based on all preceding tokens. At layer $l$, the hidden state $h^l$ of token $x$ is computed as:
\begin{equation}
h^l = h^{l-1} + a^l + m^l,
\end{equation}
where $a^l$ and $m^l$ denote the outputs of the multi-head self-attention (MHSA) and feed-forward network (FFN) modules, respectively. The FFN output is computed via a two-layer projection with a non-linear activation $\sigma$ and layer normalization $\gamma$:
\begin{equation}
\label{eq:ffn}
\underbrace{m^l}_{v} = W^l_{\text{down}} \; \underbrace{\sigma \!\left( W^l_{\text{up}} \; \gamma(h^{l-1} + a^l) \right)}_{k},
\end{equation}
where $W^l_{\text{up}} \in \mathbb{R}^{d_{\text{ff}} \times d}$ is the up-projection matrix, $W^l_{\text{down}} \in \mathbb{R}^{d \times d_{\text{ff}}}$ is the down-projection matrix, $k \in \mathbb{R}^{d_{\text{ff}}}$ is the activation serving as the \textit{key}, and $v \in \mathbb{R}^{d}$ is the FFN output serving as the \textit{value}.

Following \citet{geva2021transformer} and \citet{meng2023locating}, $W_{\text{down}}$ can be interpreted as a \textit{linear associative memory} that functions as key-value storage for factual knowledge retrieval. For instance, the key may encode the contextual representation of a query (e.g., ``The capital of France is''), while the corresponding value encodes the associated answer (e.g., ``Paris''). This perspective has inspired the \textit{locate-then-edit} paradigm for knowledge editing, where targeted modifications to $W_{\text{down}}$ enable precise updates to specific stored associations. Following \citet{deng2025everything} and \citet{jiang2025anyedit}, we use the intermediate activation at the last token of each input chunk as the key $k$ for weight updates throughout this work. The locate-then-edit procedure and its extension to long-form knowledge are detailed in Appendix~\ref{app:prelim}. For notational simplicity, we use $W$ to denote $W_{\text{down}}$ in subsequent sections. 

\section{Methodology}
\label{sec:method}

We now present \textbf{ManiEdit}, a paradigm for sequential unstructured knowledge editing. We first reframe editing from a representation-manifold perspective, then introduce two core components: \emph{Pivot Localization} for identifying informative semantic anchors in unstructured knowledge and \emph{Manifold-Aware Preservation} for geometry-aware knowledge protection (Figure~\ref{fig:ManiEdit_overview}). We finally derive a unified objective and analyze the theoretical advantages.

\subsection{A Manifold Perspective of Model Editing}

A well-established line of research has revealed that the hidden representations of LLMs exhibit rich geometric structure: activations concentrate along a structured, low-dimensional manifold~\cite{mamou2020emergence,modell2025origins,mabrok2026latent}, whose principal axes encode the statistical regularities of language. Disrupting this learned representation geometry can therefore impair the model's general capabilities~\cite{nishi2025representation_shattering}. We build our geometric formulation of knowledge editing on this empirical foundation.

\paragraph{Input and output representation manifolds.}
Following the manifold hypothesis~\cite{bengio2013representation,fefferman2016testing}, pre-trained key activations $k$ do not fill the ambient key space uniformly but concentrate near a low-dimensional \emph{input representation manifold} $\mathcal{M}_{\text{in}}$, a geometric structure whose principal directions are shaped by pre-training. We characterize this geometry via the second-moment matrix $\Sigma = \mathbb{E}[kk^\top]$, whose eigendecomposition $\Sigma = \Psi\Lambda\Psi^\top$ provides a linear approximation to the manifold's structure: an orthonormal basis $\Psi = [\psi_1,\dots,\psi_n]$ with energy spectrum $\Lambda = \operatorname{diag}(\lambda_1,\dots,\lambda_n)$. Our framework retains the full spectrum and lets each eigenvalue $\lambda_j$ encode the relative geometric importance of the corresponding direction. The weight $W$ then maps $\mathcal{M}_{\text{in}}$ to the \emph{output representation manifold}
\begin{equation*}
\mathcal{M}_{\text{out}} \;=\; W(\mathcal{M}_{\text{in}}),
\end{equation*}
which encodes the associated factual knowledge as a geometric object in the output space. This two-manifold perspective is illustrated in Figure~\ref{fig:spectral_perspective}.

\begin{figure}[t]
  \centering
  \includegraphics[width=0.75\linewidth]{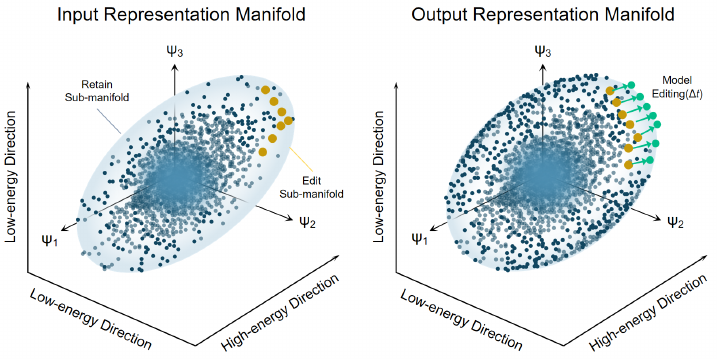}
  \caption{A manifold perspective of model editing. \textbf{Left}: The input representation manifold $\mathcal{M}_{\text{in}}$. Retain keys (blue) distribute along the principal axes $\{\psi_j\}$, forming a global sub-manifold $\mathcal{M}^{\text{retain}}_{\text{in}}$; edit-related keys (gold) form a local sub-manifold $\mathcal{M}^{\text{edit}}_{\text{in}}$. \textbf{Right}: The output representation manifold $\mathcal{M}_{\text{out}} = W(\mathcal{M}_{\text{in}})$. The edit operator $\mathcal{E}$ displaces the edit sub-manifold $\mathcal{M}^{\text{edit}}_{\text{out}}$ (gold) to its target $\mathcal{M}^{*,\text{edit}}_{\text{out}}$ (green), while the retain sub-manifold $\mathcal{M}^{\text{retain}}_{\text{out}}$ (blue) remains unchanged.}
  \label{fig:spectral_perspective}
  \vspace{-10pt}
\end{figure}

\paragraph{Knowledge editing as localized sub-manifold displacement.}
Let $\mathcal{M}^{\text{edit}}_{\text{in}} \subset \mathcal{M}_{\text{in}}$ be the \emph{edit sub-manifold}---the region of $\mathcal{M}_{\text{in}}$ associated with the target knowledge to be modified---and let $\mathcal{M}^{\text{retain}}_{\text{in}} = \mathcal{M}_{\text{in}} \setminus \mathcal{M}^{\text{edit}}_{\text{in}}$ be the complementary \emph{retain sub-manifold}. Their images under $W$ partition the output manifold: $\mathcal{M}^{\text{edit}}_{\text{out}} = W(\mathcal{M}^{\text{edit}}_{\text{in}})$ and $\mathcal{M}^{\text{retain}}_{\text{out}} = W(\mathcal{M}^{\text{retain}}_{\text{in}})$. We define the model editing operator $\mathcal{E}: W' = \mathcal{E}(W)$ as a \emph{localized sub-manifold displacement}, requiring $W'$ to satisfy:
\begin{equation}
\label{eq:edit_goal}
W'\!\bigl(\mathcal{M}^{\text{edit}}_{\text{in}}\bigr) = \mathcal{M}^{*,\text{edit}}_{\text{out}},
\qquad
W'\!\bigl(\mathcal{M}^{\text{retain}}_{\text{in}}\bigr) = \mathcal{M}^{\text{retain}}_{\text{out}}.
\end{equation}
The first condition displaces $\mathcal{M}^{\text{edit}}_{\text{out}}$ (gold) to the target $\mathcal{M}^{*,\text{edit}}_{\text{out}}$ (green), while the second holds $\mathcal{M}^{\text{retain}}_{\text{out}}$ (blue) invariant. 
The fundamental challenge of sequential long-form editing is therefore threefold:
(i) to locate representative keys that anchor the edit sub-manifold $\mathcal{M}^{\text{edit}}_{\text{in}}$ to facilitate precise editing;
(ii) to realize such localized displacements faithfully without distorting the global structure of $\mathcal{M}_{\text{out}}$;
and (iii) to prevent subsequent edits from perturbing established displacements.

\subsection{Pivot Localization}
\label{sec:leverage}

Achieving the edit objective in Equation~\ref{eq:edit_goal} first requires addressing a challenge specific to unstructured long-form knowledge editing: how to select the representative keys that anchor each sample's edit sub-manifold $\mathcal{M}^{\text{edit}}_{\text{in}}$.

\paragraph{The Mediocre-Point Dilemma.}
To handle long-form knowledge, AnyEdit~\cite{jiang2025anyedit} and AnyEdit++~\cite{tian2026anyeditpp} segment the edit sample---by a fixed sliding window, sentence boundaries, or peaks of Bayesian Surprise---and use the last token of each chunk as the representative edit key $k$.
However, none of these strategies considers the geometric position of $k$ within $\mathcal{M}_{\text{in}}$. The last-token key aggregates context from a broad mixture of preceding tokens, whose representations are distributed across the high-energy directions of $\Sigma$. This mixing effect may pull $k$ toward the high-density interior of $\mathcal{M}_{\text{in}}$---the manifold centroid---producing a ``mediocre point'' that carries little discriminative information. Modifying the mapping of such keys requires a large cost (Section~\ref{sec:theoretical_advantages}), distorting $\mathcal{M}_{\text{out}}$ and degrading the editing precision and effectiveness (Section~\ref{sec:ablation}).

\paragraph{Leverage Score for Pivot Localization.}
To address the mediocre-point dilemma, we introduce the leverage score to quantify how far a key $k$ lies from the high-density interior:
\begin{equation}
S_{\text{leverage}}(k) = k^\top \Sigma^{-1} k,
\end{equation}
where a higher leverage score indicates greater peripherality and richer discriminability. We segment the edit sample based on high-leverage keys, defined as semantic pivots, thereby anchoring the edit sub-manifold more effectively. Such semantic pivots require only a small cost to realize the displacement of $\mathcal{M}^{\text{edit}}_{\text{out}}$, which minimizes disruption to the global manifold structure, enhancing both update effectiveness and efficiency.

\begin{figure}[t]
  \centering
  \adjustbox{center, width=0.80\textwidth}{%
  \includegraphics[width=\textwidth]{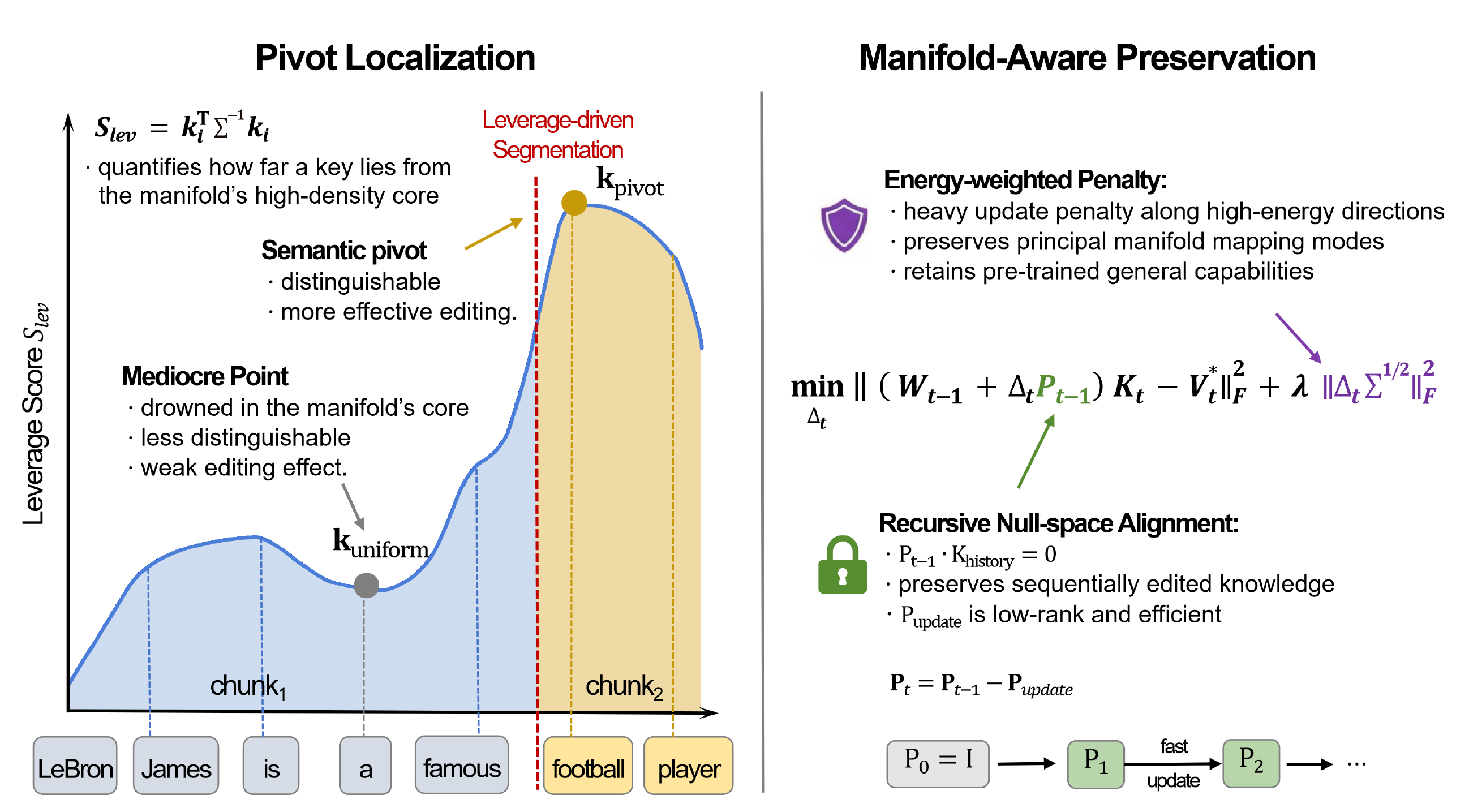}%
  }
  \vspace{-5pt}
  \caption{Overview of the ManiEdit paradigm. Pivot Localization identifies high-leverage semantic pivots to guide the segmentation of input samples, then the Manifold-Aware Preservation mechanism applies energy-weighted penalty and recursive null-space alignment to preserve general capabilities and historical edits.}
  \label{fig:ManiEdit_overview}
  \vspace{-13pt}
\end{figure}

\subsection{Manifold-Aware Preservation}
\label{sec:ssa}
Having selected the semantic pivots, we now instantiate $\mathcal{E}$ at editing step $t$ by the additive update $W_t = W_{t-1} + \Delta_t$, and derive constraints on the perturbation $\Delta_t$ that preserve these two geometrically distinct types of knowledge.

\paragraph{Preserving the Geometric Structure of the Manifold.}
The retain sub-manifold $\mathcal{M}^{\text{retain}}_{\text{in},t}$ is a globally distributed continuous structure whose geometry is encoded in the principal axes $\{\psi_j\}$ and energies $\{\lambda_j\}$. Any weight perturbation $\Delta_t$ acts on $\mathcal{M}_{\text{in}}$ as a mapping distortion, with the magnitude along $\psi_j$ given by $\|\Delta_t \psi_j\|_2$. To preserve the manifold geometry, distortions along high-energy directions should be penalized more strongly, as these directions capture dominant pre-trained geometric structure. Accordingly, we minimize the energy-weighted penalty:
\begin{equation}
\sum_{j=1}^{n} \lambda_j \| \Delta_t \psi_j \|_2^2 = \| \Delta_t \Sigma^{1/2} \|_F^2.
\end{equation}
This yields a soft geometric constraint that preserves the pre-trained manifold structure without rigidly freezing the entire representation space.

\paragraph{Disentangling Sequentially Displaced Sub-Manifolds.}
Each historical edit $i < t$ has already established a localized manifold displacement: the edit sub-manifold $\mathcal{M}^{\text{edit}}_{\text{out},i}$ was repositioned to its target. A new update $\Delta_t$ should not disturb these established displacements. A soft constraint alone is fundamentally insufficient for this and cannot prevent subspace overlap between the current edit keys $K_t$ and prior edit keys $\{K_i\}_{i<t}$, risking corruption of previously established sub-manifold locations. 
To effectively disentangle sequentially displaced sub-manifolds, we apply a recursive null-space alignment $P_{t-1}$, which satisfies $P_{t-1} K_i = 0$ for all $i < t$, guiding $\Delta_t$ toward the orthogonal complement of $\mathrm{span}\{K_i\}_{i<t}$.

\subsection{The Unified Optimization Objective} 
\label{sec:objective}
After establishing Pivot Localization and Manifold-Aware Preservation, we synthesize them into a unified optimization objective. 
At editing step $t$, the columns of $K_t$ are the semantic-pivot keys sampled from $\mathcal{M}^{\text{edit}}_{\text{in},t}$, and $V_t^*$ is the corresponding target value matrix specifying the desired location of $\mathcal{M}^{*,\text{edit}}_{\text{out},t}$. The residual displacement that $\Delta_t$ must realize is $R_t = V_t^* - W_{t-1} K_t$. We seek to minimize the displacement error of the \emph{recursive-null-space-alignment} update, subject to the \emph{energy-weighted} penalty:
\begin{equation}
\label{eq:objective} 
\min_{\Delta_t} \bigl\| (W_{t-1} + \Delta_t P_{t-1}) K_t - V_t^* \bigr\|_F^2 + \lambda \| \Delta_t \Sigma^{1/2} \|_F^2.
\end{equation}
The first term guides the effective update toward the orthogonal complement of $\mathrm{span}\{K_i\}_{i<t}$, keeping the displacement of $\mathcal{M}^{\text{edit}}_{\text{out},t}$ from perturbing previously established sub-manifolds. The second term introduces a manifold-geometry-weighted penalty that suppresses distortion of $\mathcal{M}_{\text{in}}$ along high-energy directions, preserving the global manifold geometry. $W_{t-1}$ is the current weight matrix and $\lambda$ balances the two constraints.

This objective admits a unique global closed-form solution (see Appendix~\ref{app:derivation} for the full derivation):
\begin{equation}
\label{eq:closed_form}
\Delta_t = R_t K_t^\top P_{t-1} \left( P_{t-1} K_t K_t^\top P_{t-1} + \lambda \Sigma \right)^{-1}.
\end{equation}
We then update $W_t = W_{t-1} + \Delta_t$ (see Appendix~\ref{app:leakage} for a justification).

To incorporate the newly displaced sub-manifold and protect it in subsequent edits, the recursive null-space alignment matrix is updated by excluding the subspace spanned by $K_t$:
\begin{equation}
\label{eq:recursive_null_space_alignment}
P_{t} = P_{t-1} - P_{t-1} K_t \left(K_t^\top P_{t-1} K_t + \epsilon I\right)^{-1} K_t^\top P_{t-1},
\end{equation}
where $P_0 = I$ and $\epsilon$ is a numerical stability constant. 
This recursive update accumulates all edit sub-manifolds into the protected subspace, thus effectively preserving every historical edit.

\subsection{Theoretical Advantages}
\label{sec:theoretical_advantages}

To unveil how the components of ManiEdit collaborate to realize the model editing goal of Equation~\ref{eq:edit_goal}, we present the following three propositions (see Appendix~\ref{app:spectral_derivation} for full derivation).

\begin{proposition}
  \label{prop:leverage}
  To achieve a fixed displacement $\Delta k_t = R$, the minimum cost is
  $\|\Delta \Sigma^{1/2}\|_F^2 = \|R\|^2 / (k_t^\top \Sigma^{-1} k_t)$,
  which is strictly decreasing in the leverage score $k_t^\top \Sigma^{-1} k_t$.
\end{proposition}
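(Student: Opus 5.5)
The plan is to treat this as a minimum-norm problem under a linear constraint: minimize $\|\Delta\Sigma^{1/2}\|_F^2$ over $\Delta\in\mathbb{R}^{d\times n}$ subject to $\Delta k_t = R$, where $R\in\mathbb{R}^d$ is fixed and $\Sigma\succ 0$. We assume $\Sigma$ invertible, which the leverage score $k_t^\top\Sigma^{-1}k_t$ already requires. The change of variables $\tilde\Delta = \Delta\Sigma^{1/2}$ and $u = \Sigma^{-1/2}k_t$ turns the problem into: minimize $\|\tilde\Delta\|_F^2$ subject to $\tilde\Delta u = R$. Since $\tilde\Delta\mapsto\tilde\Delta\Sigma^{-1/2}$ is a bijection, the two problems share optimal values, and $\|u\|_2^2 = k_t^\top\Sigma^{-1}k_t$ is exactly the leverage score.

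For the reduced problem we need a lower bound and a feasible point attaining it.
\begin{itemize}
\item \textbf{Lower bound.} For any feasible $\tilde\Delta$, Cauchy--Schwarz with the operator norm gives $\|R\|_2 = \|\tilde\Delta u\|_2 \le \|\tilde\Delta\|_2\|u\|_2 \le \|\tilde\Delta\|_F\|u\|_2$. Hence $\|\tilde\Delta\|_F^2 \ge \|R\|^2/\|u\|^2$.
\item \textbf{Attainment.} The rank-one choice $\tilde\Delta^\star = R u^\top/\|u\|^2$ satisfies $\tilde\Delta^\star u = R$ and has $\|\tilde\Delta^\star\|_F^2 = \|R\|^2\|u\|^2/\|u\|^4 = \|R\|^2/\|u\|^2$.
\end{itemize}
Mapping back gives the optimizer $\Delta^\star = R\,k_t^\top\Sigma^{-1}/(k_t^\top\Sigma^{-1}k_t)$. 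This can be checked directly, and it agrees with a Lagrangian derivation: the stationarity condition $\Delta\Sigma = \mu k_t^\top$ gives $\Delta = \mu k_t^\top\Sigma^{-1}$, and the constraint then fixes $\mu$. The agreement also ties the result to the $P_{t-1}=I$, $\lambda\to 0$ limit of Equation~\ref{eq:closed_form}.

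The minimum cost is therefore $\|R\|^2/(k_t^\top\Sigma^{-1}k_t)$. For $R\neq 0$ this is of the form $c/s$ with $c>0$ and $s>0$, so it is strictly decreasing in the leverage score $s$. If $R=0$ the cost is identically zero, so strictness requires $R\neq0$, and I would state that condition explicitly.

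The one place needing care is the norm inequality $\|\tilde\Delta u\|\le\|\tilde\Delta\|_F\|u\|$ together with the equality case. Alternatively, one can decompose $\tilde\Delta = \tilde\Delta^\star + E$ with $Eu=0$ and check $\langle \tilde\Delta^\star, E\rangle_F = \operatorname{tr}(R u^\top E^\top)/\|u\|^2 = R^\top E u/\|u\|^2 = 0$, so the Pythagorean identity gives both optimality and uniqueness. Beyond that, the argument is routine.
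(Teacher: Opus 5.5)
Your proof is correct and yields the same optimizer $\Delta^\star = R\,k_t^\top\Sigma^{-1}/(k_t^\top\Sigma^{-1}k_t)$ and the same minimum value as the paper, but the key step is different.

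The paper works directly with the Lagrangian $\|\Delta\Sigma^{1/2}\|_F^2 - 2\mu^\top(\Delta k_t - R)$. It solves the stationarity condition $\Delta\Sigma = \mu k_t^\top$, fixes $\mu$ from the constraint, and substitutes back. It never says why this stationary point is the global minimum. That follows from strict convexity of $\Delta\mapsto\operatorname{Tr}(\Delta\Sigma\Delta^\top)$ under a linear constraint, but the paper leaves it implicit.

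Your whitening $\tilde\Delta = \Delta\Sigma^{1/2}$, $u = \Sigma^{-1/2}k_t$ instead reduces the problem to a minimum-Frobenius-norm problem. There, the Cauchy--Schwarz lower bound plus the explicit rank-one attainer certifies global optimality with no calculus. Your alternative decomposition $\tilde\Delta = \tilde\Delta^\star + E$ with $Eu=0$ also gives uniqueness. The reduction also shows directly why the leverage score appears: it is simply $\|u\|^2$ in whitened coordinates.

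The two routes trade off as follows. The Lagrangian computation is shorter and matches the gradient-based style of the paper's closed-form derivation. Your argument is self-contained and rigorous exactly as written.

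Your remark that strict monotonicity requires $R\neq 0$ is a correct refinement the paper's statement omits. So is the implicit requirement $k_t\neq 0$, which keeps the leverage score positive.
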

This explains how high-leverage semantic pivots improve the effectiveness of each weight update: they require only a small cost (little disruption to global manifold structure) to realize the displacement of $\mathcal{M}^{\text{edit}}_{\text{out},t}$, reducing unintended perturbations.

\begin{proposition}
  \label{prop:soft}
  Consider the single-key case $K_t = k_t$, and let $\hat{k}_t = P_{t-1} k_t$ and $\tilde{k}_j = \psi_j^\top \hat{k}_t$. Then the closed-form update in Equation~\ref{eq:closed_form} satisfies $\Delta W_{\text{dir}\ j} := \Delta_t \psi_j \propto \tilde{k}_j / \lambda_j$.
  \end{proposition}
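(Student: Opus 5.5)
The plan is to reduce the closed-form update in Equation~\ref{eq:closed_form} to an explicit rank-one formula in the single-key case using the Sherman--Morrison identity, and then read off its action on each eigenvector $\psi_j$ of $\Sigma$. Throughout I will assume, as the closed form already implicitly does, that $\Sigma$ is symmetric positive definite, so that $\Sigma^{-1}$ exists and $\Sigma^{-1}\psi_j = \psi_j/\lambda_j$.

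The first step is to check that every $P_{t-1}$ is symmetric. This follows by induction from Equation~\ref{eq:recursive_null_space_alignment}. The base case is $P_0 = I$. If $P_{t-1}$ is symmetric, then $K_t^\top P_{t-1} K_t + \epsilon I$ is symmetric, so its inverse is symmetric too. Hence $P_{t-1}K_t(\cdot)^{-1}K_t^\top P_{t-1}$ is symmetric, and so is $P_t$.

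With $K_t = k_t$ and $r_t = v_t^* - W_{t-1}k_t$, symmetry gives $k_t^\top P_{t-1} = \hat{k}_t^\top$. Equation~\ref{eq:closed_form} then becomes
\begin{equation*}
\Delta_t = r_t\, \hat{k}_t^\top \bigl(\lambda\Sigma + \hat{k}_t\hat{k}_t^\top\bigr)^{-1}.
\end{equation*}
Set $A = \lambda\Sigma + \hat{k}_t\hat{k}_t^\top$, which is symmetric and invertible. Then $\hat{k}_t^\top A^{-1} = (A^{-1}\hat{k}_t)^\top$. Sherman--Morrison gives
\begin{equation*}
A^{-1}\hat{k}_t = \frac{\Sigma^{-1}\hat{k}_t}{\lambda + \hat{k}_t^\top\Sigma^{-1}\hat{k}_t}.
\end{equation*}
Substituting back yields the rank-one form
\begin{equation*}
\Delta_t = \frac{r_t\, \hat{k}_t^\top \Sigma^{-1}}{\lambda + \hat{k}_t^\top\Sigma^{-1}\hat{k}_t}.
\end{equation*}
The denominator is strictly positive because $\lambda>0$ and $\Sigma^{-1}\succ 0$.

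The last step is to apply this rank-one form to $\psi_j$. Since $\Sigma^{-1}\psi_j = \psi_j/\lambda_j$, we get
\begin{equation*}
\Delta_t\psi_j = \frac{r_t}{\lambda + \hat{k}_t^\top\Sigma^{-1}\hat{k}_t}\cdot\frac{\hat{k}_t^\top\psi_j}{\lambda_j} = c_t\,\frac{\tilde{k}_j}{\lambda_j},
\end{equation*}
where the vector $c_t = r_t/(\lambda + \hat{k}_t^\top\Sigma^{-1}\hat{k}_t)$ is independent of $j$. This is exactly the claimed proportionality $\Delta_t\psi_j \propto \tilde{k}_j/\lambda_j$, with a common vector-valued constant.

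The main obstacle is bookkeeping rather than a substantive difficulty. First, we must justify replacing $k_t^\top P_{t-1}$ by $\hat{k}_t^\top$, which needs the symmetry of $P_{t-1}$ rather than idempotence. This matters because, with $\epsilon>0$, the recursion does not produce an exact projector. Second, we must state precisely that ``$\propto$'' means a common vector factor $c_t$ shared by all directions $j$. If $r_t = 0$, the update vanishes and the statement holds trivially.
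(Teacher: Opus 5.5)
Your proof is correct and follows essentially the same route as the paper. You reduce Equation~\ref{eq:closed_form} via Sherman--Morrison with $A=\lambda\Sigma$ to the rank-one form $\Delta_t = R_t\hat{k}_t^\top\Sigma^{-1}/(\lambda + \hat{k}_t^\top\Sigma^{-1}\hat{k}_t)$, then use $\Sigma^{-1}\psi_j=\psi_j/\lambda_j$ to extract the $j$-independent factor. Your inductive check that $P_{t-1}$ is symmetric even for $\epsilon>0$ is a small but welcome addition, since the paper simply posits $P_{t-1}^\top=P_{t-1}$ in Appendix~\ref{app:derivation}.
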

The numerator $\tilde{k}_j$ measures the semantic demand of the edit along $\psi_j$, while the denominator $\lambda_j$ acts as an energy-adaptive penalty: directions that genuinely need updating receive capacity, whereas perturbations along dominant pre-trained directions are suppressed.

\begin{proposition}
  \label{prop:hard} 
  Set $\epsilon = 0$ in Equation~\ref{eq:recursive_null_space_alignment} and replace the inverse with the pseudo-inverse. Then for every $t \ge 1$: (i) $P_t K_i = 0$ for all $i \le t$; (ii) $\mathrm{rank}(P_t) \ge n - \sum_{i=1}^{t} \mathrm{rank}(K_i)$.
  \end{proposition}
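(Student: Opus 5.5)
The plan is induction on $t$, built on one structural fact: every $P_t$ is an orthogonal projector. With $\epsilon=0$ and the pseudo-inverse, set $M_t = K_t^\top P_{t-1} K_t$. The update in Equation~\ref{eq:recursive_null_space_alignment} becomes
\[
P_t = P_{t-1} - P_{t-1}K_t M_t^{+} K_t^\top P_{t-1}.
\]
The induction hypothesis is that $P_{t-1}$ is symmetric and idempotent. The base case holds trivially since $P_0=I$.

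\textbf{Inductive step.} Write $A_t = P_{t-1}K_t$. Idempotence and symmetry of $P_{t-1}$ give $M_t = A_t^\top A_t$, so
\[
Q_t := A_t (A_t^\top A_t)^{+} A_t^\top
\]
is the orthogonal projector onto $\mathrm{col}(A_t)$. This is the standard identity $A(A^\top A)^+A^\top = AA^+$, which I would verify from the SVD of $A_t$. Since $\mathrm{col}(A_t)\subseteq \mathrm{col}(P_{t-1})$, we have $P_{t-1}Q_t = Q_tP_{t-1} = Q_t$. Hence $P_t = P_{t-1} - Q_t$ is a difference of commuting projectors with $Q_t \le P_{t-1}$. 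Expanding $P_t^2$ shows $P_t$ is again a symmetric idempotent, namely the projector onto $\mathrm{col}(P_{t-1}) \ominus \mathrm{col}(A_t)$. The same decomposition also gives $\mathrm{rank}(P_t) = \mathrm{rank}(P_{t-1}) - \mathrm{rank}(A_t)$.

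\textbf{Part (i).} There are two cases.
\begin{itemize}
\item For $i=t$: $P_tK_t = A_t - Q_tA_t = 0$, because $Q_t$ fixes $\mathrm{col}(A_t)$.
\item For $i<t$: $P_t = P_t P_{t-1}$, which follows from the commuting-projector structure ($P_{t-1}-Q_t = (P_{t-1}-Q_t)P_{t-1}$). Therefore $P_tK_i = P_t(P_{t-1}K_i) = 0$ by the induction hypothesis.
\end{itemize}

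\textbf{Part (ii).} Combine the rank identity with $\mathrm{rank}(A_t) = \mathrm{rank}(P_{t-1}K_t) \le \mathrm{rank}(K_t)$. This gives $\mathrm{rank}(P_t) \ge \mathrm{rank}(P_{t-1}) - \mathrm{rank}(K_t)$. Telescoping from $\mathrm{rank}(P_0)=n$ yields the bound. The inequality can be strict when some $K_t$ overlaps the already-protected span, which is why the statement gives $\ge$ rather than equality.

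\textbf{Main obstacle.} The only delicate point is the pseudo-inverse identity $A(A^\top A)^+A^\top = AA^+$, which is what makes $Q_t$ the exact projector even when $M_t$ is singular. I would establish it first, via the SVD $A_t = U S V^\top$, which gives $(A_t^\top A_t)^+ = V S^{+2} V^\top$. Once that holds, the remainder is routine projector algebra.
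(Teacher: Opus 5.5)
Your proposal is correct and follows essentially the same route as the paper: induction on the invariant that $P_{t-1}$ is a symmetric idempotent, identification of $A_t(A_t^\top A_t)^{+}A_t^\top$ as the orthogonal projector onto $\mathrm{range}(A_t)$ so that $P_t$ is $P_{t-1}$ minus this projector, then $P_tK_i=0$ and rank telescoping. The differences are cosmetic: for $i<t$ you use $P_t = P_tP_{t-1}$ where the paper uses $A_t^\top K_i = K_t^\top P_{t-1}K_i = 0$, and you get an exact rank identity where the paper uses subadditivity; note also that your $i=t$ step $P_tK_t = A_t - Q_tA_t$ silently uses $Q_tK_t = Q_tP_{t-1}K_t = Q_tA_t$, which follows from the $Q_tP_{t-1}=Q_t$ you already established.
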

Unlike traditional null-space methods whose $P$ enforces $\Delta W_{\text{dir } j} = 0$ for $\lambda_j > \theta$ (e.g., AlphaEdit~\cite{fang2025alphaedit}, EvoEdit~\cite{lyu2025evoedit}), which severely compresses the feasible-update space and impairs editing efficacy, our $P_{t}$ allows energy-adaptive weight updates that preserve historical edits without collapsing the feasible-update rank.

Putting these components together:
Pivot Localization first identifies easily displaceable regions on $\mathcal{M}^{\text{edit}}_{\text{in}}$, enabling precise editing at low cost. 
During the displacement, the energy-weighted penalty protects the global manifold structure from unintended distortion. 
Finally, recursive null-space alignment stabilizes the relocated edit sub-manifold. 
This joint mechanism ensures effective editing while preserving historical edits and general capabilities.

\begin{table}[t]
  \centering
  \footnotesize
  \setlength{\tabcolsep}{1.8pt}
  \resizebox{\textwidth}{!}{%
  \begin{tabular}{@{}l *{12}{c}@{}}
  \toprule
  \multirow{3}{*}{\textbf{Method}}
    & \multicolumn{4}{c}{\textbf{UnKEBench}}
    & \multicolumn{4}{c}{\textbf{AKEW (Counterfact)}}
    & \multicolumn{4}{c}{\textbf{AKEW (MQUAKE)}} \\
  \cmidrule(lr){2-5}\cmidrule(lr){6-9}\cmidrule(lr){10-13}
    & \multicolumn{2}{c}{Ori} & \multicolumn{2}{c}{Para}
    & \multicolumn{2}{c}{Ori} & \multicolumn{2}{c}{Para}
    & \multicolumn{2}{c}{Ori} & \multicolumn{2}{c}{Para} \\
  \cmidrule(lr){2-3}\cmidrule(lr){4-5}\cmidrule(lr){6-7}\cmidrule(lr){8-9}\cmidrule(lr){10-11}\cmidrule(lr){12-13}
    & BERTScore & ROUGE-L & BERTScore & ROUGE-L & BERTScore & ROUGE-L & BERTScore & ROUGE-L & BERTScore & ROUGE-L & BERTScore & ROUGE-L \\
  \midrule
  \multicolumn{13}{@{}l}{\textbf{Based on Qwen2.5-7B-Instruct}} \\
  \midrule
  MEMIT          & 8.36  & 16.89 & 8.95  & 16.73 & 9.07  & 19.23 & 9.43  & 19.22 & 10.47 & 18.71 & 9.17  & 18.67 \\
  RECT           & 12.28 & 17.57 & 12.35 & 17.07 & 6.21  & 17.08 & 5.90  & 16.76 & 13.34 & 21.90 & 13.20 & 21.37 \\
  EvoEdit        & 25.79 & 16.08 & 21.60 & 15.16 & 35.89 & 24.56 & 19.85 & 20.95 & 52.28 & 26.35 & 44.51 & 21.76 \\
  UnKE           & 9.45  & 8.12  & 12.78 & 10.55  & 5.41  & 9.51  & 4.28  & 10.41  & 12.69  & 18.37 & 10.46  & 17.28  \\
  AnyEdit        & 9.80  & 15.94 & 9.67  & 15.49 & 1.06  & 13.66 & 0.75  & 13.38 & 7.80  & 15.02 & 6.88  & 16.03 \\
  AnyEdit++      & 21.90 & 13.21  & 21.61 & 13.15  & 37.42  & 22.32  & 28.51  & 21.10  & 49.24  & 30.40 & 41.65  & 23.97 \\
  FT-UKE         & 34.82 & 21.28  & 34.18 & 20.36  & 37.16  & 21.87  & 18.17  & 15.28  & 52.73  & 34.75 & 46.95  & 29.85 \\
  MOSE           & \uline{75.13} & \uline{31.72} & \textbf{75.43} & \uline{30.87} & \uline{71.03} & \uline{29.67} & \textbf{53.98} & \uline{24.54} & \uline{66.05} & \uline{34.87} & \uline{66.14} & \uline{32.47} \\
  \cmidrule(lr){1-13}
  ManiEdit      & \textbf{76.63} & \textbf{39.43} & \uline{74.97} & \textbf{38.07} & \textbf{73.88} & \textbf{38.17} & \uline{50.78} & \textbf{31.45} & \textbf{71.37} & \textbf{37.96} & \textbf{70.87} & \textbf{35.08} \\
  \midrule
  \multicolumn{13}{@{}l}{\textbf{Based on Llama3-8B-Instruct}} \\
  \midrule
  MEMIT          & 2.00  & 1.18  & 2.44  & 1.05  & 1.58  & 6.15  & 1.60  & 6.37  & 0.97  & 5.81  & 1.53  & 6.10  \\
  RECT           & 10.91 & 16.69 & 11.37 & 16.73 & 2.00  & 10.27 & 1.97  & 10.26 & 0.00  & 8.88  & 0.00  & 8.71  \\
  EvoEdit        & 1.67  & 9.77  & 1.52  & 9.91  & 19.20 & 12.37 & 7.45  & 9.82  & 23.58 & 9.89  & 24.63 & 9.87  \\
  UnKE           & 1.26  & 1.12  & 1.76  & 1.20  & 0.61  & 3.33  & 0.59  & 3.49  & 2.49  & 5.04  & 0.10  & 2.83  \\
  AnyEdit        & 11.55 & 14.40 & 11.40 & 13.95 & 4.40  & 8.55  & 5.11  & 9.17  & 8.02  & 14.19 & 8.70  & 13.86 \\
  MOSE           & \uline{62.38} & \uline{31.76} & \uline{61.11} & \uline{30.95} & \uline{50.69} & \uline{30.65} & \uline{43.77} & \uline{24.08} & \uline{63.48} & \textbf{37.97} & \uline{63.57} & \textbf{36.98} \\
  \cmidrule(lr){1-13}
  ManiEdit      & \textbf{76.22} & \textbf{37.30} & \textbf{75.80} & \textbf{35.61} & \textbf{78.50} & \textbf{37.24} & \textbf{53.88} & \textbf{28.73} & \textbf{71.51} & \uline{30.03} & \textbf{71.24} & \uline{35.34} \\
  \bottomrule
  \end{tabular}%
  }%
  \caption{Comparison of ManiEdit with existing methods on the sequential unstructured model editing tasks. The best results are highlighted in bold, while the second-best results are underlined.}
  \label{tab:main_results}
  \vspace{-15pt}
\end{table}

\section{Experiments}
\label{sec:experiments}

To rigorously evaluate the effectiveness of ManiEdit, we design a comprehensive set of experiments focusing on sequential long-form knowledge editing. Our evaluation seeks to answer the following research questions:
\begin{itemize}[leftmargin=*,nosep]
    \item \textbf{RQ1 (Overall Performance):} How does ManiEdit perform on sequential unstructured editing tasks compared to baseline methods? Can it mitigate the issue of edit forgetting?
    \item \textbf{RQ2 (General Capability Test):} How well does ManiEdit maintain general capabilities after hundreds of continual edits? Can it prevent the collapse of general capabilities?
    \item \textbf{RQ3 (Analysis of Pivot Localization):} Can Pivot Localization consistently improve the editing effectiveness of ManiEdit and other methods?
\end{itemize}
Due to space limitations, further ablations of Manifold-Aware Preservation, long-horizon stress tests, and other extended experiments are provided in Appendix~\ref{app:extended_experiments}.

\subsection{Experimental Setup}
In this subsection, we briefly outline the base LLMs, baseline methods, datasets, and evaluation metrics. More details are provided in Appendix~\ref{app:exp_details}.

\vspace{-5pt}
\paragraph{Base LLMs \& Baseline Methods.} We experiment on two widely adopted instruction-tuned LLMs: Llama3-8B-Instruct~\cite{grattafiori2024llama3} and Qwen2.5-7B-Instruct~\cite{qwen2025qwen25}. We compare ManiEdit against a comprehensive suite of model editing baselines spanning multiple paradigms: (1) \textit{Traditional editing methods}: MEMIT~\cite{meng2023mass} and RECT~\cite{gu2024rect}; (2) \textit{Sequential editing methods}: EvoEdit~\cite{lyu2025evoedit} and MOSE~\cite{xu2026multiplicative_orthogonal}; (3) \textit{Unstructured editing methods}: UnKE~\cite{deng2025everything}, FT-UKE~\cite{xiong2025finetuning}, AnyEdit~\cite{jiang2025anyedit}, and AnyEdit++~\cite{tian2026anyeditpp}. We further include three memory-based editing methods (i.e., SERAC~\cite{mitchell2022memory}, GRACE~\cite{hartvigsen2023aging}, and MELO~\cite{yu2023melo}) as additional baselines in Appendix~\ref{app:extended_splits}.

\vspace{-5pt}
\paragraph{Datasets \& Evaluation Metrics.} We evaluate ManiEdit on four established benchmarks for sequential unstructured knowledge editing: UnKEBench~\cite{deng2025everything}, AKEW (Counterfact)~\cite{wu2024akew}, AKEW (MQUAKE)~\cite{wu2024akew}, and EditEverything~\cite{jiang2025anyedit}. We evaluate the similarity between the edited model's outputs and the editing targets under three query conditions: Original (Ori), Paraphrase (Para), and Sub-question (Sub). The evaluation uses BERTScore~\cite{zhang2019bertscore} for semantic similarity and ROUGE-L~\cite{lin2004rouge} for lexical overlap.

\begin{figure}[t]
  \centering
  \includegraphics[width=1\linewidth]{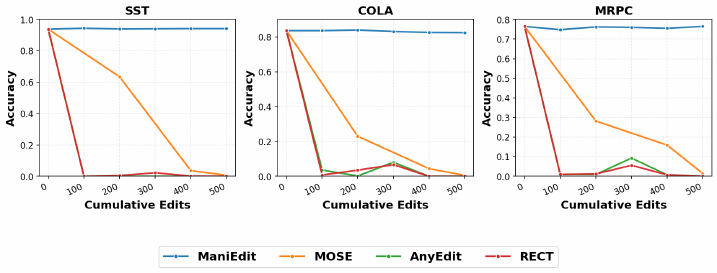}
  \caption{General capability accuracy on three GLUE tasks (SST-2, CoLA, MRPC) as the number of sequential unstructured edits increases. ManiEdit maintains near-original accuracy across all tasks, while baselines collapse rapidly.}
  \label{fig:glue_accuracy}
  \vspace{-15pt}
\end{figure}

\subsection{Sequential Unstructured Editing (RQ1)}

To evaluate the performance of different editing methods on sequential unstructured knowledge editing, we conduct experiments on two base LLMs and four benchmarks, comparing ManiEdit with all baselines. 
Table~\ref{tab:main_results} presents the results under a commonly used configuration for the sequential editing task, where 500 samples are randomly drawn from the dataset for updates, with one sample per edit (i.e., a batch size of 1). Since AKEW (MQUAKE) contains only 354 samples in total, we evaluate it on the full set rather than sampling 500. 
More sub-question-level evaluations for each benchmark are reported in Appendix~\ref{app:sub_parametric}.
Results on EditEverything are provided in Appendix~\ref{app:editeverything}.
Based on Table~\ref{tab:main_results}, we draw the following observations:

\paragraph{ManiEdit achieves the best overall performance across all benchmarks and LLMs.} Compared with the strongest baseline, ManiEdit improves on 20 out of 24 metric columns. The largest gains reach \textbf{+27.81} BERTScore on Llama3-8B-Instruct AKEW (Counterfact) Ori and \textbf{+8.50} ROUGE-L on Qwen2.5-7B-Instruct AKEW (Counterfact) Ori. Averaged over all settings, ManiEdit improves upon the strongest baseline by +7.74 BERTScore and +3.99 ROUGE-L. These gains arise from ManiEdit's precise localization of semantic pivots and its protection of historical edit sub-manifolds.

\subsection{General Capability Test (RQ2)}

To evaluate whether post-edited LLMs retain their intrinsic general capabilities, we perform sequential unstructured editing on Qwen2.5-7B-Instruct for 500 edits and track the downstream task accuracy at intervals of 100 edits. 
Figure~\ref{fig:glue_accuracy} reports the accuracy curves for ManiEdit against three competitive baselines---MOSE, AnyEdit, and RECT---evaluated on three representative GLUE tasks:
SST-2~\cite{socher2013sst} (sentiment classification), CoLA~\cite{warstadt2019cola} (linguistic acceptability), and MRPC~\cite{dolan2005mrpc} (paraphrase detection).
Further experiments on MMLU~\cite{hendrycks2021mmlu}, MNLI~\cite{williams2018mnli}, and RTE~\cite{bentivogli2009rte} are reported in Appendix~\ref{app:extended_glue}.
Based on Figure~\ref{fig:glue_accuracy}, we draw the following observations:

\paragraph{Baseline methods suffer catastrophic collapse of general capabilities within 100--200 edits.} AnyEdit and RECT drop to near-zero accuracy on all three tasks after merely 100 edits, indicating that their parameter updates severely damage the model's general language ability. 
MOSE exhibits a slower but still fatal decline---suffering substantial accuracy degradation across all three tasks by 200 edits and ultimately collapsing to near-zero levels by 500 edits.
This confirms that existing methods struggle to prevent general-capability collapse.

\paragraph{ManiEdit sustains near-original general capabilities even after 500 edits.} Across all three tasks, ManiEdit maintains accuracy within 1--2\% of the unedited model throughout the entire 500-edit sequence (e.g., SST-2: 93.8\% $\to$ 94.2\%; CoLA: 83.6\% $\to$ 82.5\%; MRPC: 76.5\% $\to$ 76.5\%). 
This stability arises from ManiEdit's protection of the geometry of the pre-trained manifold.

\begin{figure}[t]
  \centering
  \includegraphics[width=1\linewidth]{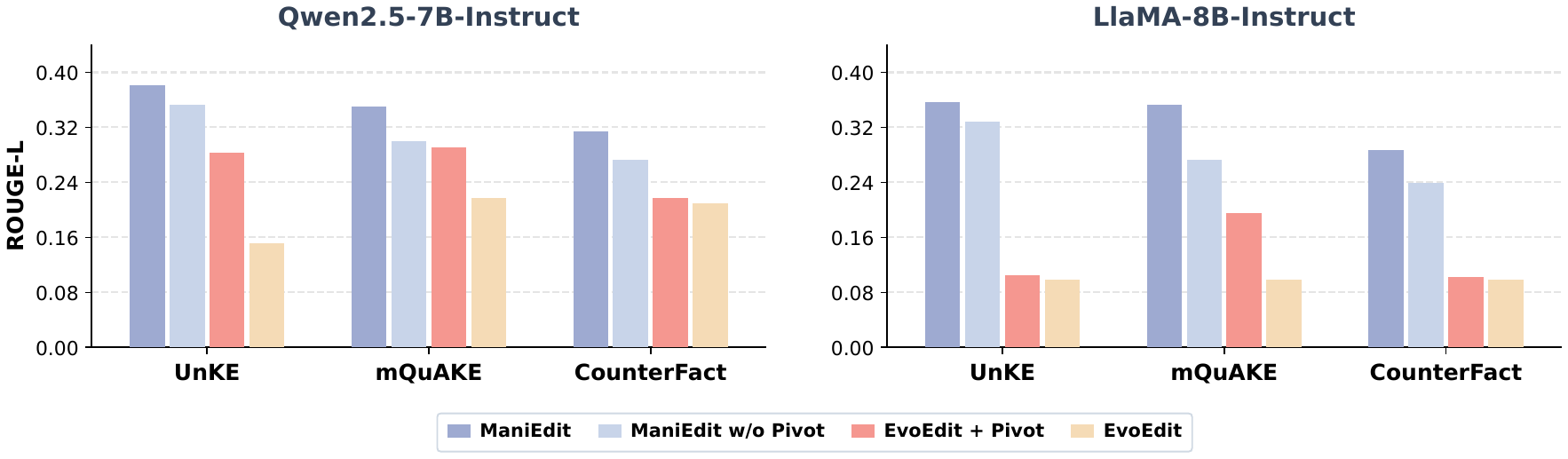}
  \caption{Analysis of Pivot Localization. Scores are Para ROUGE-L on two base LLMs across three benchmarks. ManiEdit w/o Pivot and EvoEdit use AnyEdit's fixed-window chunking.}
  \label{fig:ablation_para}
  \vspace{-15pt}
\end{figure}

\subsection{Analysis of Pivot Localization (RQ3)}
\label{sec:ablation}
To examine whether Pivot Localization provides a general benefit beyond ManiEdit itself, we evaluate it under two complementary settings: removing Pivot Localization from ManiEdit, and adding it to EvoEdit, a representative locate-then-edit baseline. 
The experimental setting follows Table~\ref{tab:main_results}.
Figure~\ref{fig:ablation_para} reports Para ROUGE-L across three benchmarks on two base LLMs. Additional Ori ROUGE-L results are provided in Appendix~\ref{app:ablation_ori}. We draw the following observation based on Figure~\ref{fig:ablation_para}:

\paragraph{Pivot Localization consistently improves editing effectiveness.} 
For EvoEdit, adding Pivot Localization improves ROUGE-L in all six settings, with large gains on Qwen-UnKEBench ($+13.13$), Qwen-MQUAKE ($+7.37$), and Llama-MQUAKE ($+9.68$). This indicates that semantic pivots are not tied to ManiEdit's preservation module, but can also strengthen other methods. 
For ManiEdit, removing Pivot Localization lowers ROUGE-L in every setting; the full model improves over the no-pivot variant by $+4.62$ ROUGE-L on average, ranging from $+2.82$ on Llama-UnKEBench to $+8.10$ on Llama-MQUAKE. 
These gains confirm that Pivot Localization improves the editing effectiveness of ManiEdit and other methods consistently.

\vspace{-3pt}

\section{Related Work}
\label{sec:related_work}

\vspace{-3pt}

This section summarizes related work; a comprehensive discussion is presented in Appendix~\ref{app:related_work}.

\vspace{-3pt}

\paragraph{Knowledge Editing.}
Traditional methods modify targeted facts represented as triplets. \textit{Parameter-preserving} methods utilize additional modules to store to-be-updated knowledge.
These modules may include codebooks, matrices, or auxiliary models, as seen in methods like GRACE~\cite{hartvigsen2023aging}, MELO~\cite{yu2023melo}, and SERAC~\cite{mitchell2022memory}.
\textit{Parameter-modifying} approaches permanently integrate knowledge into the base weights. Hyper-network methods such as MEND~\cite{mitchell2022fast} train auxiliary networks to predict weight updates. The Locate-then-Edit paradigm, exemplified by MEMIT~\cite{meng2023mass}, localizes critical knowledge neurons via causal tracing and applies closed-form updates. AlphaEdit~\cite{fang2025alphaedit} and EvoEdit~\cite{lyu2025evoedit} employ null-space projection to mitigate decay under sequential editing. 
OTE-SE~\cite{wang2026labyrinth} keeps the accumulated sequential update equivalent to the corresponding one-time editing solution. 
MOSE~\cite{xu2026multiplicative_orthogonal} proposes multiplicative orthogonal updates as an alternative to standard additive weight increments.

\vspace{-3pt}

\paragraph{Unstructured Knowledge Editing.} 
Real-world knowledge is typically embedded in unstructured, long-form text with rich contextual dependencies. Recent work has begun to address this more challenging setting. UnKE~\cite{deng2025everything} treats the model as a key-encoder and value-generator and trains critical layers. FT-UKE~\cite{xiong2025finetuning} provides a training recipe that makes vanilla fine-tuning effective for unstructured knowledge editing. 
AnyEdit~\cite{jiang2025anyedit} decomposes long-form text into uniform chunks and iteratively edits the last token of each chunk.
AnyEdit++~\cite{tian2026anyeditpp} instead places chunk boundaries at peaks of Bayesian Surprise.
\vspace{-3pt}
\section{Conclusion}
\vspace{-3pt}
In this paper, we present ManiEdit for sequential unstructured knowledge editing. By novelly reframing editing as a localized sub-manifold displacement, we propose Pivot Localization and Manifold-Aware Preservation for edit point identification and manifold structure preservation. Experiments confirm that ManiEdit not only achieves superior editing performance but also effectively prevents edit forgetting and general-capability collapse, marking a significant advance toward the real-world deployment of knowledge editing. 

\subsection*{AI use statement}

In this work, we used generative AI tools to edit the manuscript for readability, to format references, and to assist in the writing of proofs.
We have not used generative AI tools to help develop theoretical models or conceptual frameworks, formulate mathematical claims, provide critical ingredients for proving mathematical claims, propose or refine hypotheses, design or provide feedback on research methodology or experiments, implement methods, or interpret results. 
Generating synthetic datasets, assisting with translation, cleaning and reformatting datasets, and supporting qualitative and thematic data analysis are not applicable to this work.
We have reviewed all AI-assisted work: LLM-assisted text and proof write-ups were checked and edited by the authors for factual accuracy.
We take responsibility for the final content of this work, including text, claims or artifacts produced with the aid of generative AI.

\subsection*{Reproducibility statement}

We have taken the following steps to facilitate reproducibility.
The ManiEdit algorithm, including Pivot Localization and the closed-form Manifold-Aware Preservation update, is specified in Section~\ref{sec:method}.
Complete proofs of the theoretical claims are given in Appendices~\ref{app:derivation} and~\ref{app:spectral_derivation}.
Backbone models, baselines, hyperparameters, datasets, evaluation metrics, and hardware settings are reported in Appendix~\ref{app:exp_details}.
All editing and general-capability benchmarks used in this work are publicly available.

\bibliography{custom}
\bibliographystyle{maniedit_references}

\clearpage
\appendix
\section{Extended Preliminaries}
\label{app:prelim}

After the critical FFN layer is identified (typically via causal tracing), standard locate-then-edit methods first optimize a target value $v^{*}$ for a chosen key $k$ and then write the pair $(k, v^{*})$ into $W$ (Section~\ref{sec:preliminary}). We recap this procedure for structured facts, and then its extension to unstructured long-form knowledge.

\subsection{Standard Locate-then-Edit Procedure}
A structured fact is represented as a subject--relation--object triplet $(s,r,o)$. ROME~\cite{meng2023locating} and MEMIT~\cite{meng2023mass} encode it as a key and a corresponding value: the key is the FFN activation at the last token of the subject $s$, and the value is the associated FFN output that should realize the object $o$.

To obtain this value, a residual $\delta$ is added to the current FFN output $h = Wk$ and optimized to raise the likelihood of $o$:
\begin{equation}
\label{eq:vec_opt}
\delta^{*} = \operatorname*{arg\max}_{\delta} \log P\bigl(o \mid s, r;\, h \leftarrow h + \delta\bigr).
\end{equation}
The target value is then $v^{*} = h + \delta^{*}$.

The resulting pair $(k, v^{*})$ is then written into $W$ by a constrained least-squares update that maps $k$ to $v^{*}$ while limiting drift on other keys~\cite{meng2023locating,meng2023mass}. 
Although this procedure is effective for short facts, a single pair $(k, v^{*})$ is often insufficient for long-form text~\cite{jiang2025anyedit}.

\subsection{Autoregressive Editing of Long-Form Knowledge}
To edit unstructured long-form text $Y$, methods such as AnyEdit~\cite{jiang2025anyedit} segment $Y$ into chunks $\{C_j\}_{j=1}^{M}$ and apply vector optimization to each chunk in order. For chunk $C_j$, the key $k_j$ is the FFN activation at the last token of the preceding chunk. A local residual $\delta_j$ is optimized to generate $C_j$ given the prefix and previous chunks, while the residuals $\{\delta_i^{*}\}_{i<j}$ remain active in the forward pass:
\begin{equation}
\label{eq:ar_delta}
\delta_j^{*} = \operatorname*{arg\max}_{\delta_j} \log P\bigl(C_j \mid \text{prefix},\, C_{<j};\,
\{h_i \leftarrow Wk_i + \delta_i^{*}\}_{i<j},\; h_j \leftarrow Wk_j + \delta_j\bigr).
\end{equation}
Setting $v_j^{*} = Wk_j + \delta_j$ yields the pairs $(k_j, v_j^{*})$. After all pairs are collected, $\{(k_j, v_j^{*})\}_{j=1}^{M}$ are then written into $W$ in a single batch.

\section{Experimental Setup Details}
\label{app:exp_details}

\subsection{Baseline Methods}
\label{app:baselines}

We compare ManiEdit against a diverse set of knowledge editing baselines spanning three major paradigms. Below we provide a detailed description of each method.

\paragraph{Knowledge Editing Methods.}
These methods edit structured knowledge represented as subject--relation--object triplets. The dominant locate-then-edit paradigm identifies critical parameters (typically FFN weights) associated with the target fact and applies optimization-based closed-form updates.

\begin{itemize}
\item \textbf{MEMIT}~\cite{meng2023mass} extends ROME to mass-editing by distributing parameter updates across multiple MLP layers via closed-form solutions. While effective for batch editing of structured triplets, it lacks mechanisms to prevent inter-edit interference in sequential settings, leading to edit forgetting. 

\item \textbf{RECT}~\cite{gu2024rect} is a plug-in regularizer on locate-then-edit updates such as ROME and MEMIT, rather than a standalone editor. It keeps the entries of the update whose relative change is largest and sets the rest to zero, sparsifying the update so that it is less prone to overfit the edited facts.

\item \textbf{EvoEdit}~\cite{lyu2025evoedit} addresses sequential editing by projecting each new update into the null-space of previously protected knowledge, geometrically isolating successive edits. However, it applies hard projection uniformly to both pre-trained and edited knowledge, which collapses the feasible-update rank, thereby compressing the editable space and impairing editing efficacy.

\item \textbf{MOSE}~\cite{xu2026multiplicative_orthogonal} proposes multiplicative orthogonal updates as an alternative to the standard additive weight increments, aiming to preserve the norm and condition number of the parameter matrix throughout sequential editing. This geometric stabilization strategy demonstrates strong retention on structured triplet editing, but overlooks the geometric asymmetry between pre-trained knowledge and sequential edits, leading to suboptimal outcomes in long-form scenarios.
\end{itemize}

\paragraph{Unstructured Editing Methods.}
These methods are specifically designed to handle unstructured, paragraph-level or document-level knowledge.

\begin{itemize}
\item \textbf{UnKE}~\cite{deng2025everything} achieves unstructured knowledge editing by performing block-level (rather than single-layer) localization and gradient-descent-based optimization. While it achieves strong single-edit performance, its reliance on gradient descent without explicit orthogonal constraints makes it vulnerable to catastrophic interference when applied to sequential editing.

\item \textbf{FT-UKE}~\cite{xiong2025finetuning} reassesses fine-tuning for unstructured knowledge editing and provides a training recipe under which vanilla fine-tuning becomes a stronger UKE baseline. However, it likewise suffers from inter-edit interference under sequential editing, leading to edit forgetting.

\item \textbf{AnyEdit}~\cite{jiang2025anyedit} introduces a sequential chunking strategy that decomposes long-form text into fixed-length segments and iteratively applies AlphaEdit-style updates per chunk. This autoregressive editing paradigm enables processing of arbitrarily long texts but falls into the mediocre-point dilemma, degrading the editing precision and effectiveness.

\item \textbf{AnyEdit++}~\cite{tian2026anyeditpp} replaces uniform windows with adaptive chunk boundaries at peaks of Bayesian Surprise. While this segmentation better respects semantic transitions, it still ignores the geometric position of the edit key within $\mathcal{M}_{\text{in}}$ and thus remains subject to the mediocre-point dilemma.

\end{itemize}

\paragraph{Memory-based Methods.}
These methods utilize additional modules to store to-be-updated knowledge. Although they avoid modifying the original model weights, they rely heavily on retrieval accuracy and struggle with generalized queries that paraphrase edited facts. Furthermore, their reliance on external storage raises concerns regarding scalability and portability.

\begin{itemize}
\item \textbf{SERAC}~\cite{mitchell2022memory} adopts a semi-parametric architecture: edits are stored in an external memory bank, and a trained scope classifier determines at inference time whether a given input should be routed to an auxiliary counterfactual model or the original base model. 

\item \textbf{GRACE}~\cite{hartvigsen2023aging} maintains a discrete key-value codebook at a designated model layer. During inference, the input's hidden representation is matched against codebook keys; if a match is found, the corresponding learned value vector replaces the original layer output. This design supports lifelong editing with inter-edit isolation, but effectiveness depends on the quality of the matching mechanism.

\item \textbf{MELO}~\cite{yu2023melo} assigns non-overlapping LoRA adaptor blocks to different edits, with a routing mechanism that activates the relevant adaptor at inference time. It introduces additional parameter modules, therefore incurring growing storage and routing overhead as edits accumulate.

\end{itemize}

\subsection{Datasets}
\label{app:datasets}

We evaluate ManiEdit on four benchmarks that collectively cover unstructured long-form knowledge, counterfactual updates, multi-hop reasoning consistency, and diverse-domain editing.

\paragraph{UnKEBench.} UnKEBench~\cite{deng2025everything} is constructed to evaluate the editing of unstructured, relatively lengthy knowledge that goes beyond simple triplets or linear fact chains. It contains 1,000 counterfactual unstructured texts originating from ConflictQA, a benchmark specifically designed to distinguish LLMs' parametric memory from counter-memory. This design is crucial for ensuring a clear boundary between pre-training knowledge and edited knowledge, preventing the model from conflating the two. Each sample includes a main question (with a paraphrase variant) and a reference long-form answer as the editing target. Additionally, sub-question lists are provided for every sample for multi-angle testing of the edited knowledge.

\paragraph{AKEW (Counterfact).} The AKEW benchmark~\cite{wu2024akew} (Assessing Knowledge Editing in the Wild) provides a more realistic evaluation framework by supporting three input modalities for the same knowledge update: structured facts (isolated triplets), unstructured facts (natural language descriptions), and extracted triplets (automatically derived from unstructured text). The Counterfact subset contains 975 samples and focuses on counterfactual updates, where each sample contains a \texttt{requested\_rewrite} specifying the subject, relation, and target replacement, along with both a structured one-sentence fact (\texttt{fact\_new}) and a richer unstructured description (\texttt{fact\_new\_uns}). In this work, we primarily focus on the unstructured factual knowledge setting to align with our long-form editing objective.

\paragraph{AKEW (MQUAKE).} The MQUAKE-CF subset within AKEW contains 354 samples and extends the evaluation to multi-hop reasoning consistency. Derived from the MQUAKE benchmark, it examines whether editing a base fact triggers the correct cascading changes in answers to multi-hop questions that logically depend on the edited fact. AKEW frames this multi-hop verification under the same three-modality protocol (structured / unstructured / extracted), enabling systematic comparison of how different input representations affect editing fidelity across reasoning chains. The sample fields mirror those of AKEW-Counterfact, with the key distinction being the emphasis on entailed multi-hop consequences rather than single-fact recall.

\paragraph{EditEverything.} EditEverything~\cite{jiang2025anyedit} integrates question-answering data from multiple specialized domains, constructing long and diverse knowledge formats that are substantially more challenging to edit. Specifically, the benchmark includes: (1) \textit{Mathematics}: longer samples from the Orca-Math dataset containing grade-school math word problems; (2) \textit{Code}: samples from the MBPP dataset, comprising crowd-sourced Python programming problems covering programming fundamentals and standard library functionalities; (3) \textit{Chemistry}: problem-solution pairs from the Camel-Chemistry dataset, spanning 25 chemistry topics with structured subtopic hierarchies; (4) \textit{Biology}: scientific QA pairs drawn from domain-specific corpora; (5) \textit{News} and \textit{Poetry}: since these categories often contain real-world knowledge that LLMs may already possess, synthetic data is generated using GPT-4o to ensure the target information is novel to the model. This cross-domain design tests whether editing methods can generalize beyond standard factual text to heterogeneous knowledge formats.

\subsection{General Capability Benchmarks}
\label{app:glue_datasets}

Beyond the long-form knowledge editing benchmarks above, we evaluate general capability preservation using six established benchmarks.

\paragraph{SST-2.} The Stanford Sentiment Treebank~\cite{socher2013sst} is a single-sentence binary sentiment classification task built from movie review sentences with human-annotated sentiment labels. It is widely used to probe whether edited models retain surface-level language understanding.

\paragraph{MRPC.} The Microsoft Research Paraphrase Corpus~\cite{dolan2005mrpc} is a sentence-pair benchmark for semantic equivalence detection. Each sample consists of two sentences, and the model must determine whether they are paraphrases. MRPC tests whether parameter edits preserve fine-grained semantic reasoning.

\paragraph{CoLA.} The Corpus of Linguistic Acceptability~\cite{warstadt2019cola} is a single-sentence classification task in which sentences are annotated as grammatically acceptable or unacceptable. It probes the model's internalized syntactic knowledge after editing.

\paragraph{MMLU.} The Massive Multitask Language Understanding benchmark~\cite{hendrycks2021mmlu} is a comprehensive evaluation that measures multi-task accuracy across 57 academic subjects, spanning STEM, humanities, and social sciences. It is evaluated under the zero-shot setting and provides a broad signal of whether general world knowledge is preserved after sequential editing.

\paragraph{MNLI.} The Multi-Genre Natural Language Inference corpus~\cite{williams2018mnli} covers sentence-pair natural language inference across multiple genres of written and spoken English. Given a premise and a hypothesis, the model must infer the logical relationship (entailment, neutral, or contradiction). MNLI tests whether edits disrupt the model's cross-domain reasoning.

\paragraph{RTE.} The Recognizing Textual Entailment challenge~\cite{bentivogli2009rte} is a binary natural language inference task that determines whether a premise sentence logically entails a given hypothesis. It provides a focused test of deductive reasoning under sequential parameter perturbation.

\subsection{Evaluation Metrics}
\label{app:metrics}

Traditional evaluation metrics for structured knowledge editing---Efficacy, Generalization, and Specificity---are designed around exact-match comparisons with short target answers. While well-suited for triplet-based editing, they are insufficient for evaluating long-form and diverse-formatted knowledge, where the model may generate a response that captures the essential information yet differs substantially in surface form from the reference. To address this, we follow existing benchmarks for unstructured knowledge editing and adopt evaluation metrics that accommodate the verbosity and complexity of long-form responses.

\paragraph{Lexical Similarity.} We employ \textbf{ROUGE-L}~\cite{lin2004rouge} to measure the longest common subsequence overlap between the model-generated text and the target answer. ROUGE-L captures the sequential structure of the generated response and provides insight into the surface-level accuracy of the edited content. ROUGE-1 is additionally reported in Appendix tables.

\paragraph{Semantic Similarity.} Word-level overlap alone is insufficient for capturing the nuanced understanding that a model must exhibit after editing. A generated response may convey the correct meaning using different vocabulary or sentence structure, which lexical metrics would penalize. To bridge this gap, we employ \textbf{BERTScore}~\cite{zhang2019bertscore}, which leverages contextual embeddings to compute token-level cosine similarity between the generated text and the reference. BERTScore provides a more balanced evaluation that extends beyond lexical matching, quantifying the depth of the model's semantic comprehension of the edited knowledge. Following prior work, we report the F1 variant of BERTScore throughout our experiments.

\paragraph{Query Conditions.} We evaluate editing quality under three query conditions:
\begin{itemize}
    \item \textbf{Original (Ori):} The model is queried with the original question directly associated with the edited knowledge, testing whether the edit has been successfully incorporated. Both BERTScore and ROUGE are reported.
    \item \textbf{Paraphrase (Para):} The model is queried with a semantically equivalent but differently phrased question, testing whether the edited knowledge has been genuinely internalized rather than superficially memorized. Both BERTScore and ROUGE are reported.
    \item \textbf{Sub-question (Sub):} The model is queried with fine-grained sub-questions that probe specific aspects of the edited content. We report ROUGE against the corresponding sub-answers as a lexical measure of fine-grained coverage.
\end{itemize}

\subsection{Implementation Details}
\label{app:impl_details}

All experiments are conducted on NVIDIA RTX PRO 5000 72GB Blackwell GPUs. We adopt greedy decoding, which makes generation deterministic for a fixed edited checkpoint, so we report single-run results rather than averaged statistics over multiple random seeds. For baselines, we primarily follow the default configurations provided in each method's official repository, with minor adjustments to align cross-method comparisons. Below we summarize the key hyperparameters.

\paragraph{ManiEdit.}
On Qwen2.5-7B-Instruct, we target layer 8 for editing. Text is segmented with the top-$M$ highest-leverage semantic pivots as boundaries, rather than with a fixed-length window. The regularization weight $\lambda$ is set to 15{,}000, with $\Sigma$ estimated from 100{,}000 Wikipedia samples. The value optimization uses 25 gradient steps with a learning rate of 0.5, weight decay of 0.001, and clamp-norm factor 4. The loss is applied at layer 27. The projection matrix uses $\epsilon = 10^{-6}$ for numerical stability.
On Llama3-8B-Instruct, the key differences are: $\lambda = 30{,}000$, loss layer 31. All other parameters remain the same.

\paragraph{MEMIT \& RECT.}
Both methods use the MEMIT\_ARE implementation from the AnyEdit codebase. Edits are distributed across layers [4--8] on both backbones. On Qwen, $\lambda = 15{,}000$ and window size is 50; on Llama3, $\lambda = 30{,}000$ and window size is 40. Value optimization follows the same protocol as ManiEdit (25 steps, lr 0.5, weight decay 0.001, clamp-norm factor 4). RECT additionally applies a sparsity fraction of 0.6 to regularize weight perturbations.

\paragraph{AnyEdit \& EvoEdit.}
AnyEdit extends AlphaEdit with sequential chunking. Both methods share layers [4--8], and the same value optimization and window size settings as MEMIT per backbone. AnyEdit further introduces a null-space projection threshold of 0.02 and L2 weight of 10.

\paragraph{AnyEdit++.}
On Qwen2.5-7B-Instruct, layer 8 is targeted for editing. The value optimization uses 25 gradient steps at learning rate $0.5$ (weight decay $10^{-3}$, KL factor $0.0625$, clamp-norm factor $4$), the value-loss layer is 27, and $\lambda = 15{,}000$.

\paragraph{UnKE.}
UnKE performs gradient-descent-based block-level editing on layer 7 for both backbones, with a learning rate of $2 \times 10^{-4}$ and 50 optimization steps (early-stop loss $0.1$). To constrain parameter drift, 20 external data samples (max length 256) are used as locality anchors. The value loss layer is set to 27 (Qwen) or 31 (Llama3).

\paragraph{FT-UKE.}
FT-UKE performs gradient-descent-based editing of a single FFN \texttt{down\_proj} matrix. On Qwen2.5-7B-Instruct, layer 27 is edited with a learning rate of $5 \times 10^{-4}$, and 10 optimization steps (early-stop loss $0.1$). The maximum sequence length is 1024, and gradient clipping is set to 1.0.
 
\paragraph{MOSE.}
MOSE applies multiplicative orthogonal updates. On Qwen, layers [4, 5, 6] are edited with 20 optimization steps and learning rate $5 \times 10^{-4}$. On Llama3, layers [19, 20, 21] are edited with 25 optimization steps.

\paragraph{Memory-based Methods.}
\textbf{SERAC} pairs each base LLM with a smaller counterfactual model (Qwen2.5-0.5B-Instruct for Qwen; Llama-3.2-1B-Instruct for Llama3) and a DistilBERT scope classifier with a cache hit threshold of 0.5.
\textbf{GRACE} performs editing at layer 18 (Qwen) or layer 27 (Llama3), with edit lr 1.0, 50 iterations, $\epsilon = 1.0$, and Euclidean distance for key matching.
\textbf{MELO} builds upon GRACE by adding LoRA adaptors ($r = \alpha = 64$) on the two layers following the GRACE layer, with 350 blocks $\times$ 2 ranks per block (total rank 700), initial radius 0.5, and edit lr 0.001.

\section{Extended Experiments}
\label{app:extended_experiments}

\begin{table}[t]
  \centering
  \footnotesize
  \setlength{\tabcolsep}{2.5pt}
  \resizebox{\textwidth}{!}{%
  \begin{tabular}{@{}l *{8}{c}@{}}
  \toprule
  \multirow{2}{*}{\textbf{Method}}
    & \multicolumn{3}{c}{\textbf{Ori}} & \multicolumn{3}{c}{\textbf{Para}} & \multicolumn{2}{c}{\textbf{Sub}} \\
  \cmidrule(lr){2-4}\cmidrule(lr){5-7}\cmidrule(lr){8-9}
    & BERTScore & ROUGE-L & ROUGE-1 & BERTScore & ROUGE-L & ROUGE-1 & ROUGE-L & ROUGE-1 \\
  \midrule
  \multicolumn{9}{@{}l}{\textit{Llama3-8B-Instruct}} \\
  SERAC        & 70.15 & 27.89 & 29.73 & \uline{70.44} & \uline{26.91} & \uline{28.88} & 22.03 & 22.71 \\
  MELO         & 69.58 & 25.19 & 27.13 & 70.04 & 25.26 & 27.26 & 21.98 & 22.66 \\
  GRACE        & \uline{73.02} & \uline{34.34} & \uline{36.17} & 70.11 & 25.42 & 27.43 & \uline{22.41} & \uline{23.09} \\
  ManiEdit    & \textbf{76.22} & \textbf{37.30} & \textbf{39.66} & \textbf{75.80} & \textbf{35.61} & \textbf{38.11} & \textbf{32.26} & \textbf{33.28} \\
  \midrule
  \multicolumn{9}{@{}l}{\textit{Qwen2.5-7B-Instruct}} \\
  SERAC        & 71.73 & 30.93 & 33.14 & \uline{70.89} & \uline{29.15} & \uline{31.50} & 24.18 & 24.99 \\
  MELO         & 71.26 & 28.51 & 30.86 & 70.48 & 27.53 & 29.95 & 24.19 & 24.98 \\
  GRACE        & \uline{74.36} & \uline{37.12} & \uline{39.39} & 70.55 & 27.96 & 30.42 & \uline{24.58} & \uline{25.39} \\
  ManiEdit    & \textbf{76.63} & \textbf{39.43} & \textbf{42.38} & \textbf{74.97} & \textbf{38.07} & \textbf{40.97} & \textbf{32.03} & \textbf{33.05} \\
  \bottomrule
  \end{tabular}%
  }%
  \caption{Comparison of ManiEdit with memory-based methods under the sequential unstructured editing setting on UnKEBench.}
  \label{tab:app_unke_splits}
\end{table}

\begin{table}[t]
  \centering
  \footnotesize
  \setlength{\tabcolsep}{2.5pt}
  \resizebox{\textwidth}{!}{%
  \begin{tabular}{@{}l *{8}{c}@{}}
  \toprule
  \multirow{2}{*}{\textbf{Method}}
    & \multicolumn{3}{c}{\textbf{Ori}} & \multicolumn{3}{c}{\textbf{Para}} & \multicolumn{2}{c}{\textbf{Sub}} \\
  \cmidrule(lr){2-4}\cmidrule(lr){5-7}\cmidrule(lr){8-9}
    & BERTScore & ROUGE-L & ROUGE-1 & BERTScore & ROUGE-L & ROUGE-1 & ROUGE-L & ROUGE-1 \\
  \midrule
  \multicolumn{9}{@{}l}{\textit{Llama3-8B-Instruct}} \\
  SERAC        & 69.22 & \uline{23.20} & \uline{24.88} & 42.97 & 15.25 & 16.58 & \uline{37.36} & \uline{38.35} \\
  MELO         & \uline{69.69} & 20.30 & 22.22 & \uline{52.09} & \uline{19.11} & \uline{21.04} & 34.42 & 35.42 \\
  GRACE        & 69.05 & 19.60 & 21.25 & 41.90 & 12.97 & 14.18 & 33.92 & 34.98 \\
  ManiEdit    & \textbf{78.50} & \textbf{37.24} & \textbf{39.98} & \textbf{53.88} & \textbf{28.73} & \textbf{30.80} & \textbf{40.05} & \textbf{41.18} \\
  \midrule
  \multicolumn{9}{@{}l}{\textit{Qwen2.5-7B-Instruct}} \\
  SERAC        & 69.78 & \uline{21.76} & \uline{23.67} & \uline{52.00} & \uline{19.98} & \uline{21.96} & \uline{36.00} & \uline{36.93} \\
  MELO         & 68.69 & 18.59 & 20.24 & 41.91 & 12.98 & 14.19 & 33.59 & 34.68 \\
  GRACE        & \uline{69.84} & 20.63 & 22.53 & \textbf{52.07} & 19.12 & 21.03 & 34.61 & 35.59 \\
  ManiEdit    & \textbf{73.88} & \textbf{38.17} & \textbf{40.96} & 50.78 & \textbf{31.45} & \textbf{34.07} & \textbf{40.60} & \textbf{41.60} \\
  \bottomrule
  \end{tabular}%
  }%
  \caption{Comparison of ManiEdit with memory-based methods under the sequential unstructured editing setting on the AKEW (Counterfact) benchmark.}
  \label{tab:app_counterfact_splits}
\end{table}

\begin{table}[t]
  \centering
  \footnotesize
  \setlength{\tabcolsep}{2.5pt}
  \resizebox{\textwidth}{!}{%
  \begin{tabular}{@{}l *{8}{c}@{}}
  \toprule
  \multirow{2}{*}{\textbf{Method}}
    & \multicolumn{3}{c}{\textbf{Ori}} & \multicolumn{3}{c}{\textbf{Para}} & \multicolumn{2}{c}{\textbf{Sub}} \\
  \cmidrule(lr){2-4}\cmidrule(lr){5-7}\cmidrule(lr){8-9}
    & BERTScore & ROUGE-L & ROUGE-1 & BERTScore & ROUGE-L & ROUGE-1 & ROUGE-L & ROUGE-1 \\
  \midrule
  \multicolumn{9}{@{}l}{\textit{Llama3-8B-Instruct}} \\
  SERAC        & 69.97 & 19.94 & 21.50 & 69.75 & \uline{21.20} & \uline{23.54} & \textbf{42.36} & \textbf{43.64} \\
  MELO         & 69.52 & 17.00 & 18.54 & 69.78 & 21.14 & 23.48 & 39.95 & 41.35 \\
  GRACE        & \textbf{81.82} & \textbf{52.57} & \textbf{53.37} & \uline{69.82} & 21.13 & 23.45 & \uline{39.97} & \uline{41.38} \\
  ManiEdit    & \uline{71.51} & \uline{30.03} & \uline{32.57} & \textbf{71.24} & \textbf{35.34} & \textbf{38.35} & 32.70 & 34.05 \\
  \midrule
  \multicolumn{9}{@{}l}{\textit{Qwen2.5-7B-Instruct}} \\
  SERAC        & 67.67 & 22.72 & 24.84 & 68.35 & \uline{20.79} & \uline{23.14} & \uline{42.89} & \uline{44.00} \\
  MELO         & 68.27 & 19.62 & 21.75 & \uline{68.37} & 20.68 & 23.02 & 41.62 & 42.73 \\
  GRACE        & \textbf{80.56} & \textbf{48.33} & \textbf{49.75} & 68.35 & 20.66 & 23.02 & 41.72 & 42.80 \\
  ManiEdit    & \uline{71.37} & \uline{37.96} & \uline{41.15} & \textbf{70.87} & \textbf{35.08} & \textbf{38.25} & \textbf{44.74} & \textbf{45.88} \\
  \bottomrule
  \end{tabular}%
  }%
  \caption{Comparison of ManiEdit with memory-based methods under the sequential unstructured editing setting on the AKEW (MQUAKE) benchmark.}
  \label{tab:app_mquake_splits}
\end{table}

\subsection{Knowledge Editing Performance: ManiEdit vs. Memory-based Baselines}
\label{app:extended_splits}

We supplement Table~\ref{tab:main_results} with BERTScore, ROUGE-L, and ROUGE-1 (all in \%) under the Original (Ori), Paraphrase (Para), and Sub-question (Sub) splits on UnKEBench, AKEW (Counterfact), and AKEW (MQUAKE), comparing memory-based baselines---SERAC, MELO, and GRACE---against ManiEdit on both Llama3-8B-Instruct and Qwen2.5-7B-Instruct. For each backbone and each metric, the highest value is in bold and the second-highest is underlined; detailed results are provided in Tables~\ref{tab:app_unke_splits}, \ref{tab:app_counterfact_splits}, and \ref{tab:app_mquake_splits}.

The results reveal a consistent pattern that distinguishes memory-based retrieval from genuine knowledge internalization:

\begin{itemize}[leftmargin=*,nosep]
  \item \textbf{ManiEdit achieves the best Para scores across nearly all benchmarks, demonstrating genuine knowledge internalization.} Across BERTScore, ROUGE-L, and ROUGE-1 on the three benchmarks and both backbones (18 Para columns), ManiEdit ranks first in 17 columns. Relative to the second-best memory-based method, the largest leads are $+5.36$ BERTScore, $+14.29$ ROUGE-L, and $+15.11$ ROUGE-1, with average leads of $+2.31$, $+11.19$, and $+11.75$, respectively. These Para gains indicate that ManiEdit internalizes edited knowledge rather than relying on external routing like memory-based methods.
  \item \textbf{On Sub-question evaluation, ManiEdit remains competitive while memory-based methods show inconsistent fine-grained recall of the edits.} SERAC leads Sub scores on MQUAKE Llama3. However, ManiEdit consistently leads Sub scores on UnKEBench and AKEW (Counterfact), with average leads of $+6.15$ ROUGE-L and $+6.34$ ROUGE-1 over the second-best memory-based method, reflecting more robust fine-grained recall of the edited knowledge.
\end{itemize}

\subsection{Sub-Question Evaluation for Model Editing Methods}
\label{app:sub_parametric}

Section~\ref{sec:experiments} reports the \textbf{Original} and \textbf{Paraphrase} splits in Table~\ref{tab:main_results}. Table~\ref{tab:app_sub_parametric} further reports \textbf{Sub-question (Sub)} results for model editing methods, using ROUGE-1 and ROUGE-L (both in \%) on the same backbones and benchmarks. Sub questions probe fine-grained facets of the edited knowledge, providing a more targeted test of whether the update supports fine-grained factual recall beyond the main query. Each column's best is bold and the runner-up is underlined.

\begin{table}[t]
  \centering
  \footnotesize
  \setlength{\tabcolsep}{1.8pt}
  \resizebox{\textwidth}{!}{%
  \begin{tabular}{@{}l *{12}{c}@{}}
  \toprule
  \multirow{3}{*}{\textbf{Method}}
    & \multicolumn{6}{c}{\textbf{Qwen2.5-7B-Instruct}}
    & \multicolumn{6}{c}{\textbf{Llama3-8B-Instruct}} \\
  \cmidrule(lr){2-7}\cmidrule(lr){8-13}
    & \multicolumn{2}{c}{\textbf{UnKEBench}} & \multicolumn{2}{c}{\textbf{AKEW (MQUAKE)}} & \multicolumn{2}{c}{\textbf{AKEW (Counterfact)}}
    & \multicolumn{2}{c}{\textbf{UnKEBench}} & \multicolumn{2}{c}{\textbf{AKEW (MQUAKE)}} & \multicolumn{2}{c}{\textbf{AKEW (Counterfact)}} \\
  \cmidrule(lr){2-3}\cmidrule(lr){4-5}\cmidrule(lr){6-7}\cmidrule(lr){8-9}\cmidrule(lr){10-11}\cmidrule(lr){12-13}
    & ROUGE-1 & ROUGE-L & ROUGE-1 & ROUGE-L & ROUGE-1 & ROUGE-L
    & ROUGE-1 & ROUGE-L & ROUGE-1 & ROUGE-L & ROUGE-1 & ROUGE-L \\
  \midrule
  MEMIT          & 16.54 & 16.35 & 23.06 & 22.82 & 26.57 & 25.80
                 & 1.39  & 1.35  & 10.33 & 10.00 & 13.09 & 12.80 \\
  RECT           & 15.52 & 15.31 & 26.10 & 25.58 & 24.39 & 24.10
                 & 15.00 & 14.63 & 17.52 & 17.38 & 16.19 & 16.13 \\
  UnKE           & 6.72  & 6.38  & 25.03 & 26.62 & 11.26 & 11.09
                 & 0.69  & 0.68  & 5.51  & 5.32  & 4.06 & 4.03 \\
  AnyEdit        & 15.63 & 15.29 & 21.63 & 21.14 & 21.82 & 21.37
                 & 15.84 & 15.62 & 21.01 & 20.43 & 13.22 & 12.98 \\
  EvoEdit        & 14.72 & 14.34 & 22.69 & 21.98 & 26.72 & 26.11
                 & 11.98 & 11.79 & 16.20 & 15.71 & 18.22 & 17.84 \\
  MOSE           & \uline{28.63} & \uline{27.39} & \uline{37.81} & \uline{37.16} & \uline{31.13} & \uline{30.28}
                 & \uline{27.58} & \uline{26.88} & \textbf{45.09} & \textbf{44.41} & \uline{35.75} & \uline{35.00} \\
  \cmidrule(lr){1-13}
  ManiEdit       & \textbf{33.05} & \textbf{32.03} & \textbf{45.88} & \textbf{44.74} & \textbf{41.60} & \textbf{40.60}
                 & \textbf{33.28} & \textbf{32.26} & \uline{34.05} & \uline{32.70} & \textbf{41.18} & \textbf{40.05} \\
  \bottomrule
  \end{tabular}%
  }
  \caption{Comparison of ManiEdit with existing methods on the sequential unstructured model editing tasks under the Sub-question split. The best results are highlighted in bold, while the second-best results are underlined.}
  \label{tab:app_sub_parametric}
\end{table}

\begin{itemize}[leftmargin=*,nosep]
\item \textbf{ManiEdit provides the most reliable fine-grained factual recall among all editors.} ManiEdit ranks first in 10 of 12 Sub columns. Across these 10 columns, ManiEdit outperforms the strongest baseline by $+6.82$ ROUGE-1 and $+6.59$ ROUGE-L on average. Overall, the Sub split confirms that ManiEdit improves not only direct and paraphrased recall, but also fine-grained recall of edited knowledge.
\end{itemize}

\subsection{General Capability Preservation}
\label{app:extended_glue}

To complement the three GLUE benchmarks reported in Section~\ref{sec:experiments} (SST-2, CoLA, MRPC), we conduct the same 500-edit sequential stability experiment on three additional benchmarks: \textbf{MMLU}~\cite{hendrycks2021mmlu}, \textbf{MNLI}~\cite{williams2018mnli}, and \textbf{RTE}~\cite{bentivogli2009rte}. All experiments use Qwen2.5-7B-Instruct and follow the same evaluation protocol as Section~\ref{sec:experiments}: accuracy is measured at intervals of 100 cumulative edits against MOSE, AnyEdit, and RECT.

\begin{figure}[t]
  \centering
  \includegraphics[width=\textwidth]{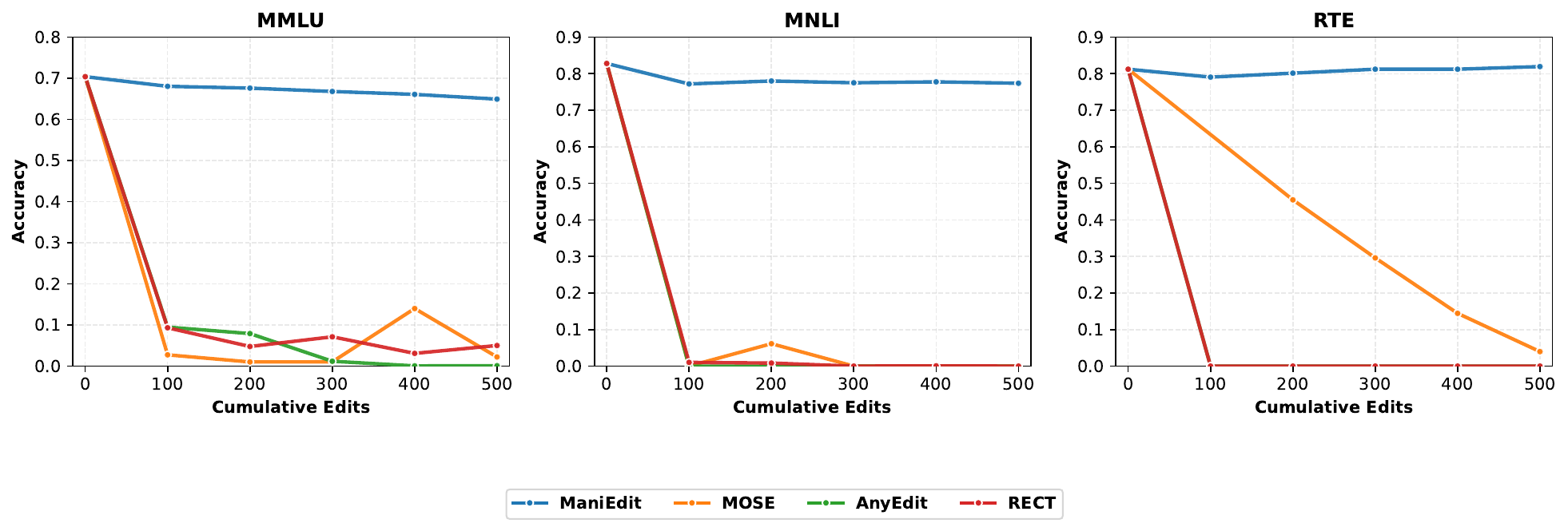}
  \caption{General capability accuracy on MMLU, MNLI, and RTE as the number of sequential unstructured edits increases. ManiEdit maintains near-original accuracy across all tasks, while baselines collapse rapidly.}
  \label{fig:glue_appendix}
\end{figure}

Figure~\ref{fig:glue_appendix} shows the accuracy curves. The findings are consistent with the main-paper results:

\begin{itemize}[leftmargin=*,nosep]
  \item \textbf{AnyEdit and RECT collapse within 100 edits} on all three benchmarks. On MNLI and RTE, both methods drop to 0.0\% accuracy after only 100 edits and remain at zero throughout. On MMLU, RECT falls from 70.4\% to 9.3\%, and AnyEdit reaches 9.5\% by edit 100.
  \item \textbf{MOSE collapses abruptly and erratically.} On MMLU and MNLI it drops to near-zero within 100 edits, with highly unstable fluctuations across subsequent checkpoints rather than any consistent recovery. On RTE, accuracy declines steadily from the pre-edit 81.2\% to 45.5\% at 200 edits, and further to 4.0\% at 500 edits.
  \item \textbf{ManiEdit maintains stable performance} throughout the full 500-edit sequence. On MMLU it declines by only $\approx$5.5\% (70.4\%$\to$64.9\%); on MNLI it stays within 5.4\% of the original (82.8\%$\to$77.4\%); on RTE it stays near the pre-edit accuracy and finishes slightly higher (81.2\%$\to$81.9\%). This broad consistency across diverse benchmarks confirms that ManiEdit's differential constraint design provides robust protection of pre-trained capabilities under extended sequential editing.
\end{itemize}

\begin{figure}[t]
  \centering
  \includegraphics[width=\textwidth]{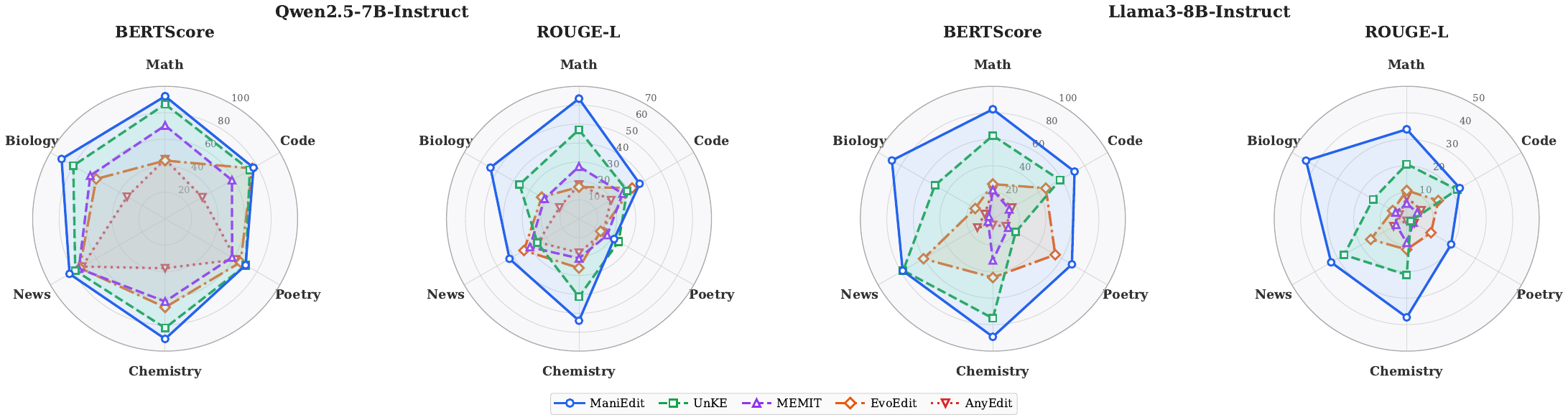}
  \caption{Radar plots of per-domain editing performance on EditEverything. From left to right: BERTScore and ROUGE-L (\%) on Qwen2.5-7B-Instruct, then BERTScore and ROUGE-L (\%) on Llama3-8B-Instruct. Each axis is one domain; a larger enclosed area indicates better overall performance.}
  \label{fig:editeverything_radar}
\end{figure}

\subsection{Cross-Domain Editing Performance on EditEverything}
\label{app:editeverything}

Figure~\ref{fig:editeverything_radar} presents per-domain BERTScore and ROUGE-L on the EditEverything benchmark~\cite{jiang2025anyedit} across six specialized knowledge categories---Math, Code, Poetry, Chemistry, News, and Biology---on both Qwen2.5-7B-Instruct and Llama3-8B-Instruct.

The radar plots reveal several consistent cross-domain patterns:

\begin{itemize}[leftmargin=*,nosep]
  \item \textbf{ManiEdit spans the largest radar area across all six domains on both models.}
  On Qwen2.5-7B-Instruct, ManiEdit achieves the highest BERTScore and ROUGE-L in five of six domains: Math, Chemistry, Biology, News, and Code, with the sole exception of Poetry where UnKE achieves a marginally higher score.
  On Llama3-8B-Instruct, ManiEdit leads all six domains unambiguously. Averaged across domains, ManiEdit outperforms the second-best method by approximately 16 BERTScore points on Llama3 and 5 BERTScore points on Qwen2.5.
  \item \textbf{Scientific domains yield high editing accuracy; open-ended domains remain challenging.}
  Across all methods and both backbones, Math, Chemistry, and Biology consistently attain higher scores than Poetry, Code, and News. The regularity and self-consistency of mathematical and scientific knowledge facilitate accurate weight-level encoding. Poetry records the lowest scores for all methods---reflecting the stylistic variability of open-ended text and the difficulty of producing surface-faithful completions. However, ManiEdit still maintains 70.13 (Qwen2.5) and 68.91 (Llama3) BERTScore even in this hardest category.
\end{itemize}

\subsection{Analysis of Pivot Localization under the Original-Query Setting}
\label{app:ablation_ori}

Figure~\ref{fig:ablation_ori} complements the Para-query analysis in Section~\ref{sec:ablation} by reporting Ori ROUGE-L under the same six backbone--benchmark settings. This view tests whether Pivot Localization also improves direct editing efficacy when the query exactly matches the edited knowledge.

\begin{figure}[t]
  \centering
  \includegraphics[width=\textwidth]{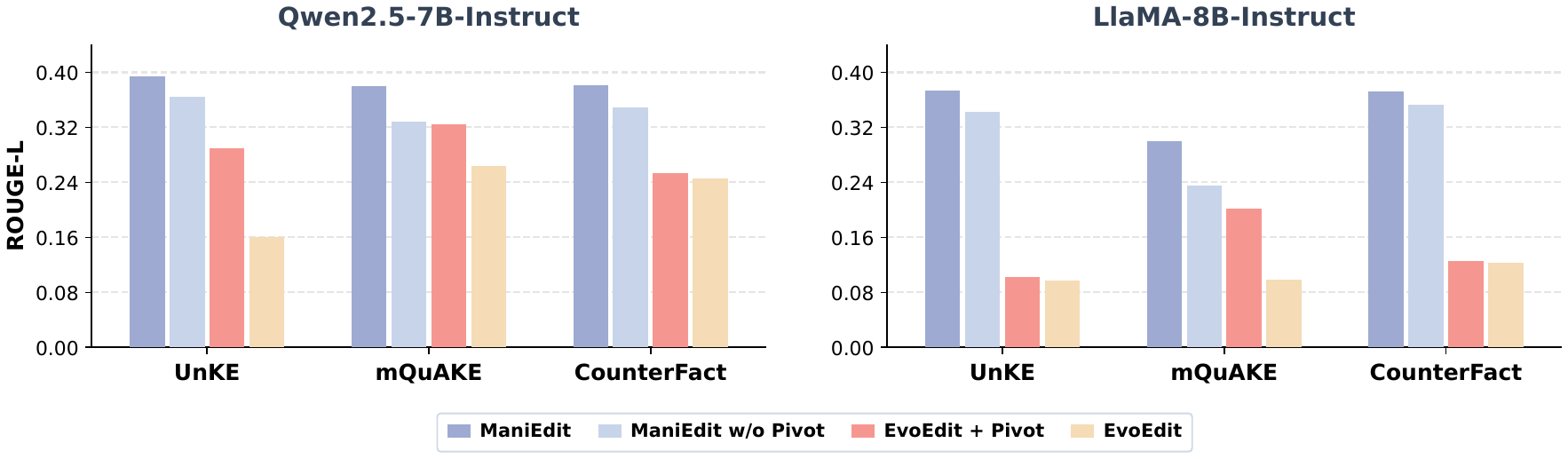}
  \caption{Analysis of Pivot Localization under the Original-Query setting. Scores are Ori ROUGE-L on two base LLMs across three benchmarks. ManiEdit w/o Pivot and EvoEdit use the AnyEdit chunking protocol.}
  \label{fig:ablation_ori}
\end{figure} 

The Ori-query results reinforce the plug-in value of Pivot Localization for locate-then-edit methods. Adding Pivot Localization to EvoEdit improves ROUGE-L in all six settings, with especially large gains on Qwen-UnKEBench ($16.08 \to 28.93$), Qwen-MQUAKE ($26.35 \to 32.50$), and Llama-MQUAKE ($9.89 \to 20.23$). This suggests that locating high-leverage semantic pivots can directly improve the quality of the update target.

For ManiEdit, removing Pivot Localization lowers ROUGE-L in every setting; the full model improves over the no-pivot variant by $+3.83$ ROUGE-L on average, ranging from $+1.92$ on Llama-Counterfact to $+6.54$ on Llama-MQUAKE. 
Together, the Ori and Para analyses show that Pivot Localization improves both paraphrase robustness and original-query effectiveness in every setting.

\begin{table}[t]
  \centering
  \footnotesize
  \setlength{\tabcolsep}{2.0pt}
  \resizebox{\textwidth}{!}{%
  \begin{tabular}{@{}llllcccc@{}}
  \toprule
  \multirow{2}{*}{\textbf{Backbone}} & \multirow{2}{*}{\textbf{Benchmark}}
  & \multirow{2}{*}{\textbf{General-Capability Preservation}} & \multirow{2}{*}{\textbf{Edit-Knowledge Preservation}}
  & \multicolumn{2}{c}{\textbf{Original}} & \multicolumn{2}{c}{\textbf{Para}} \\
  \cmidrule(lr){5-6}\cmidrule(lr){7-8}
  & & & & BERTScore & ROUGE-L & BERTScore & ROUGE-L \\
  \midrule
  \multirow{12}{*}{Qwen2.5-7B-Instruct}
  & \multirow{4}{*}{UnKEBench}
  & Null-space Projection & Recursive Null-space Alignment & 43.69 & 24.12 & 41.85 & 23.64 \\
  & & Energy-weighted Penalty & Regularization & 46.73 & 29.14 & 46.20 & 28.46 \\
  & & Null-space Projection & Regularization & 37.08 & 30.90 & 37.99 & 31.45 \\
  & & Energy-weighted Penalty & Recursive Null-space Alignment & \textbf{76.63} & \textbf{39.43} & \textbf{74.97} & \textbf{38.07} \\
  \cmidrule(lr){2-8}
  & \multirow{4}{*}{AKEW (Counterfact)}
  & Null-space Projection & Recursive Null-space Alignment & 45.76 & 27.54 & 30.49 & 24.60 \\
  & & Energy-weighted Penalty & Regularization & 62.26 & 32.93 & 50.33 & 30.52 \\
  & & Null-space Projection & Regularization & 48.74 & 31.25 & 44.40 & 30.03 \\
  & & Energy-weighted Penalty & Recursive Null-space Alignment & \textbf{73.88} & \textbf{38.17} & \textbf{50.78} & \textbf{31.45} \\
  \cmidrule(lr){2-8}
  & \multirow{4}{*}{AKEW (MQUAKE)}
  & Null-space Projection & Recursive Null-space Alignment & 64.68 & 36.56 & 60.66 & 30.54 \\
  & & Energy-weighted Penalty & Regularization & 64.57 & 33.84 & 62.77 & 30.72 \\
  & & Null-space Projection & Regularization & 36.97 & 32.53 & 38.90 & 31.50 \\
  & & Energy-weighted Penalty & Recursive Null-space Alignment & \textbf{71.37} & \textbf{37.96} & \textbf{70.87} & \textbf{35.08} \\
  \midrule
  \multirow{12}{*}{Llama3-8B-Instruct}
  & \multirow{4}{*}{UnKEBench}
  & Null-space Projection & Recursive Null-space Alignment & 8.17 & 9.13 & 7.18 & 9.24 \\
  & & Energy-weighted Penalty & Regularization & 3.95 & 6.40 & 4.19 & 6.83 \\
  & & Null-space Projection & Regularization & 0.00 & 7.72 & 0.37 & 8.33 \\
  & & Energy-weighted Penalty & Recursive Null-space Alignment & \textbf{76.22} & \textbf{37.30} & \textbf{75.80} & \textbf{35.61} \\
  \cmidrule(lr){2-8}
  & \multirow{4}{*}{AKEW (Counterfact)}
  & Null-space Projection & Recursive Null-space Alignment & 13.63 & 11.14 & 5.78 & 9.74 \\
  & & Energy-weighted Penalty & Regularization & 6.04 & 3.81 & 4.10 & 4.74 \\
  & & Null-space Projection & Regularization & 9.07 & 7.12 & 6.65 & 8.07 \\
  & & Energy-weighted Penalty & Recursive Null-space Alignment & \textbf{78.50} & \textbf{37.24} & \textbf{53.88} & \textbf{28.73} \\
  \cmidrule(lr){2-8}
  & \multirow{4}{*}{AKEW (MQUAKE)}
  & Null-space Projection & Recursive Null-space Alignment & 52.25 & 19.60 & 49.53 & 19.54 \\
  & & Energy-weighted Penalty & Regularization & 3.40 & 3.09 & 1.93 & 1.99 \\
  & & Null-space Projection & Regularization & 5.64 & 7.32 & 7.52 & 10.53 \\
  & & Energy-weighted Penalty & Recursive Null-space Alignment & \textbf{71.51} & \textbf{30.03} & \textbf{71.24} & \textbf{35.34} \\
  \bottomrule
  \end{tabular}%
  }
  \caption{Ablation of Manifold-Aware Preservation components on UnKEBench, AKEW (Counterfact), and AKEW (MQUAKE). The left component in each pair protects general pre-trained capabilities, and the right component protects sequentially edited knowledge.}
  \label{tab:app_matp_component_ablation}
\end{table}

\subsection{Ablation of Manifold-Aware Preservation Components}
\label{app:matp_component_ablation}

To analyze the two protection mechanisms in Manifold-Aware Preservation, Table~\ref{tab:app_matp_component_ablation} compares four combinations of constraints. The left component protects general pre-trained capabilities and the right component preserves sequentially edited knowledge; ManiEdit is the \textit{Energy-weighted Penalty + Recursive Null-space Alignment} combination. 

We instantiate the four combinations as follows. Let $K_0$ collect keys of preserved pre-trained knowledge. Following AlphaEdit~\cite{fang2025alphaedit}, we take $P^{\mathrm{NS}}=\hat{U}\hat{U}^{\top}$, where $\hat{U}$ retains the eigenvectors of $K_0K_0^{\top}$ associated with zero eigenvalues (in practice, those below a small threshold), so that $P^{\mathrm{NS}}$ maps the update into the null space of $K_0$. Let $K_{<t}$ collect all keys of edits $i<t$, and let $P_{t-1}$ be the recursive null-space alignment of Equation~\ref{eq:recursive_null_space_alignment}. 

\paragraph{Null-space Projection + Recursive Null-space Alignment.}
Initialize $P_0=P^{\mathrm{NS}}$ and recursively update $P_{t-1}$ by Equation~\ref{eq:recursive_null_space_alignment}, so both pre-trained and historical keys are protected by the null space:
\begin{equation}
\min_{\Delta_t}\ \bigl\|(W_{t-1}+\Delta_t P_{t-1})K_t-V_t^*\bigr\|_F^2+\lambda\|\Delta_t\|_F^2.
\end{equation}

\paragraph{Energy-weighted Penalty + Regularization.}
The energy-weighted penalty $\lambda\|\Delta_t\Sigma^{1/2}\|_F^2$ preserves pre-trained geometry, and $\|\Delta_t K_{<t}\|_F^2$ limits the output displacement of historical keys:
\begin{equation}
\min_{\Delta_t}\ \bigl\|(W_{t-1}+\Delta_t)K_t-V_t^*\bigr\|_F^2+\lambda\|\Delta_t\Sigma^{1/2}\|_F^2+\beta\|\Delta_t K_{<t}\|_F^2.
\end{equation}

\paragraph{Null-space Projection + Regularization.}
$P^{\mathrm{NS}}$ preserves pre-trained keys, while historical edits are protected by a soft penalty:
\begin{equation}
\min_{\Delta_t}\ \bigl\|(W_{t-1}+\Delta_t P^{\mathrm{NS}})K_t-V_t^*\bigr\|_F^2+\beta\|\Delta_t K_{<t}\|_F^2.
\end{equation}

\paragraph{Energy-weighted Penalty + Recursive Null-space Alignment.}
Initialize $P_0=I$ and recursively update $P_{t-1}$ by Equation~\ref{eq:recursive_null_space_alignment}. The energy-weighted penalty preserves pre-trained geometry, and $P_{t-1}$ protects historical edits:
\begin{equation}
\min_{\Delta_t}\ \bigl\|(W_{t-1}+\Delta_t P_{t-1})K_t-V_t^*\bigr\|_F^2+\lambda\|\Delta_t\Sigma^{1/2}\|_F^2.
\end{equation}

The results show that neither purely soft regularization nor a uniform projection reliably protects both knowledge types. Replacing either side of ManiEdit's asymmetric design substantially degrades editing quality in all settings, especially on Llama3-8B-Instruct. The full combination of energy-weighted preservation for the pre-trained manifold and recursive null-space alignment for historical edits achieves optimal performance.

\subsection{Long-Horizon Sequential Editing Stress Test}
\label{app:stress_test}
To further assess robustness under longer edit sequences, we extend the editing horizon from 500 edits to 1{,}000 edits on UnKEBench. Table~\ref{tab:stress_llama_unke} reports a comparison against MOSE on Llama3-8B-Instruct with UnKEBench; the percentage-point change $\Delta$ quantifies how much editing performance drifts as the sequence length doubles.

\begin{table}[t]
  \centering
  \footnotesize
  \setlength{\tabcolsep}{2.5pt}
  \resizebox{\linewidth}{!}{%
  \begin{tabular}{@{}l *{8}{c}@{}}
  \toprule
  \multirow{3}{*}{\textbf{Setting}}
    & \multicolumn{4}{c}{\textbf{MOSE}}
    & \multicolumn{4}{c}{\textbf{ManiEdit}} \\
  \cmidrule(lr){2-5}\cmidrule(lr){6-9}
    & \multicolumn{2}{c}{\textbf{Ori}} & \multicolumn{2}{c}{\textbf{Para}}
    & \multicolumn{2}{c}{\textbf{Ori}} & \multicolumn{2}{c}{\textbf{Para}} \\
  \cmidrule(lr){2-3}\cmidrule(lr){4-5}\cmidrule(lr){6-7}\cmidrule(lr){8-9}
    & BERTScore & ROUGE-L & BERTScore & ROUGE-L
    & BERTScore & ROUGE-L & BERTScore & ROUGE-L \\
  \midrule
  @500      & 62.38 & 31.76 & 61.11 & 30.95 & 76.22 & 37.30 & 75.80 & 35.61 \\
  @1{,}000  & 54.65 & 20.57 & 55.56 & 19.80 & 77.00 & 34.99 & 75.66 & 34.61 \\
  $\Delta$  & $-7.73$ & $-11.19$ & $-5.55$ & $-11.15$ & $+0.78$ & $-2.31$ & $-0.14$ & $-1.00$ \\
  \bottomrule
  \end{tabular}%
  }
  \caption{Long-horizon stress test on Llama3-8B-Instruct with UnKEBench (scores in \%). $\Delta$ denotes the change from 500 to 1{,}000 sequential edits.}
  \label{tab:stress_llama_unke}
\end{table}

\paragraph{ManiEdit maintains its effectiveness under long-horizon editing.} After doubling the edit horizon, ManiEdit's Para and Ori ROUGE-L drop by merely 1.00 and 2.31, whereas MOSE declines sharply by over 11 ROUGE-L points on both Para and Ori. This demonstrates ManiEdit's strong robustness under long-horizon unstructured editing.

\subsection{Preservation of Historical Edits via Recursive Null-space Alignment}
\label{app:leakage}
The closed-form solution (Equation~\ref{eq:closed_form}) can be applied in two ways: $W_t = W_{t-1} + \Delta_t$ or $W_t = W_{t-1} + \Delta_t P_{t-1}$. This paper adopts the former, so that $P_{t-1}$ in the optimization objective (Equation~\ref{eq:objective}) guides $\Delta_t$ toward the orthogonal complement of $\mathrm{span}\{K_i\}_{i<t}$, driving $\Delta_t K_i$ toward zero. This section empirically shows that this constraint substantially reduces the interference of a new update with historical edit keys, thereby preserving historical edits.

Writing $\Delta_t = R_tF_t$ with $F_t := (\lambda I + G_t)^{-1}A_t^\top\Sigma^{-1}$, $A_t := P_{t-1}K_t$, and $G_t := A_t^\top\Sigma^{-1}A_t$ (Appendix~\ref{app:soft_proof}), the residual $R_t$ enters every response linearly, so we evaluate the update through $F_t$. Since $F_t$ depends on $P_{t-1}$, we compute $P_{t-1}$ with $\epsilon = 0$ in line with Proposition~\ref{prop:hard}, and compare against a baseline that replaces $P_{t-1}$ by $I$ (i.e., the same energy-weighted update without recursive null-space alignment). We report

\begin{equation*}
\mathrm{leak}_t = \frac{\|F_tK_{<t}\|_F}{\|F_tK_t\|_F},
\qquad 
\mathrm{cum}_t = \frac{\bigl(\sum_{s>t}\|F_sK_t\|_F^2\bigr)^{1/2}}{\|F_tK_t\|_F},
\end{equation*}
where $K_{<t}$ collects all keys of edits $i < t$.
Thus $\mathrm{leak}_t$ measures how strongly the $t$-th update applies its residual to all previous edit keys, relative to its own keys; $\mathrm{cum}_t$ measures how strongly later updates apply their residuals to edit $t$, relative to that edit's own update.
This measurement uses the second-moment matrix $\Sigma \in \mathbb{R}^{n \times n}$ ($n = 18{,}944$) in the key space of the layer-8 FFN down-projection matrix in Qwen2.5-7B-Instruct, together with the edit keys of the 500 UnKEBench samples (1{,}427 keys in total).

\begin{table}[t]
  \centering 
  \footnotesize
  \setlength{\tabcolsep}{4pt}
  \begin{tabular}{@{}lcccc@{}}
  \toprule
  \multirow{2}{*}{\textbf{Update}}
    & \multicolumn{3}{c}{$\mathrm{leak}_t \downarrow$}
    & \multicolumn{1}{c}{$\mathrm{cum}_t \downarrow$} \\
  \cmidrule(lr){2-4}\cmidrule(lr){5-5}
    & Median & Mean & Max & Median  \\
  \midrule
  Without alignment ($P_{t-1} = I$) & 9.02 & 8.62 & 16.0 & 8.72 \\
  ManiEdit                          & 0.305 & 0.343 & 2.09 & 0.297 \\
  \midrule
  Reduction (median)                & \multicolumn{3}{c}{$29.6\times$} & $29.4\times$ \\
  \bottomrule
  \end{tabular}
  \caption{$\mathrm{leak}_t$ and $\mathrm{cum}_t$ over 500 sequential edits, with and without recursive null-space alignment.}
  \label{tab:leakage} 
\end{table}

Table~\ref{tab:leakage} shows that without alignment, the median $\mathrm{leak}_t$ is $9.02$ and the median $\mathrm{cum}_t$ is $8.72$. With alignment, these drop to $0.305$ and $0.297$, a reduction of about $30\times$ in both.
These results show that recursive null-space alignment effectively guides $\Delta_t$ toward the orthogonal complement of $\mathrm{span}\{K_i\}_{i<t}$, thereby preserving historical edits against both a new update and those that follow.

\section{Derivation of the Closed-Form Solution}
\label{app:derivation}

In this section, we provide the rigorous mathematical derivation for the closed-form solution of our unified optimization objective.

\subsection{Problem Formulation}
At editing step $t$, we seek to find the optimal weight increment $\Delta_t \in \mathbb{R}^{m \times n}$ that minimizes the following objective function:
\begin{equation}
\begin{split}
\min_{\Delta_t} \mathcal{J}(\Delta_t) := &\left\| (W_{t-1} + \Delta_t P_{t-1}) K_t - V_t^* \right\|_F^2 \\
&+ \lambda \left\| \Delta_t \Sigma^{1/2} \right\|_F^2,
\end{split}
\end{equation}
where:
\begin{itemize}
    \item $K_t \in \mathbb{R}^{n \times b}$ is the current input key matrix (representing a batch of $b$ keys).
    \item $V_t^* \in \mathbb{R}^{m \times b}$ is the target output matrix.
    \item $P_{t-1} \in \mathbb{R}^{n \times n}$ is a symmetric matrix ($P_{t-1}^\top = P_{t-1}$).
    \item $\Sigma \in \mathbb{R}^{n \times n}$ is the symmetric positive-definite feature second-moment matrix.
    \item $\lambda > 0$ is the regularization coefficient.
\end{itemize}
Let $R_t := V_t^* - W_{t-1} K_t \in \mathbb{R}^{m \times b}$ denote the current residual matrix. The objective simplifies to:
\begin{equation}
\mathcal{J}(\Delta_t) = \left\| \Delta_t P_{t-1} K_t - R_t \right\|_F^2 + \lambda \left\| \Delta_t \Sigma^{1/2} \right\|_F^2.
\end{equation}

\subsection{Matrix Trace Reformulation}
To facilitate differentiation, we rewrite the objective using matrix trace properties: $\|A\|_F^2 = \operatorname{Tr}(AA^\top)$ for a matrix $A$.
Expanding the mapping loss term:
\begin{equation}
\begin{aligned}
&\left\| \Delta_t P_{t-1} K_t - R_t \right\|_F^2 \\
&= \operatorname{Tr}\left( (\Delta_t P_{t-1} K_t - R_t)(\Delta_t P_{t-1} K_t - R_t)^\top \right) \\
&= \operatorname{Tr}\left( \Delta_t P_{t-1} K_t K_t^\top P_{t-1}^\top \Delta_t^\top - R_t K_t^\top P_{t-1}^\top \Delta_t^\top \right. \\
&\quad \left. - \Delta_t P_{t-1} K_t R_t^\top + R_t R_t^\top \right).
\end{aligned}
\end{equation}
Since $\operatorname{Tr}(A) = \operatorname{Tr}(A^\top)$ and $P_{t-1}$ is symmetric, we have $\operatorname{Tr}(R_t K_t^\top P_{t-1} \Delta_t^\top) = \operatorname{Tr}(\Delta_t P_{t-1} K_t R_t^\top)$. The mapping loss becomes:
\begin{equation}
\begin{aligned}
&\operatorname{Tr}\left( \Delta_t (P_{t-1} K_t K_t^\top P_{t-1}) \Delta_t^\top \right) \\
&\quad - 2\operatorname{Tr}\left( R_t K_t^\top P_{t-1} \Delta_t^\top \right) \\
&\quad + \operatorname{Tr}(R_t R_t^\top).
\end{aligned}
\end{equation}
Expanding the regularization term:
\begin{equation}
\begin{split}
\lambda \left\| \Delta_t \Sigma^{1/2} \right\|_F^2 &= \lambda \operatorname{Tr}\left( \Delta_t \Sigma^{1/2} (\Sigma^{1/2})^\top \Delta_t^\top \right) \\
&= \lambda \operatorname{Tr}\left( \Delta_t \Sigma \Delta_t^\top \right).
\end{split}
\end{equation}
Ignoring the constant term $\operatorname{Tr}(R_t R_t^\top)$ which does not depend on $\Delta_t$, the objective function becomes:
\begin{equation}
\begin{aligned}
\mathcal{J}(\Delta_t) &= \operatorname{Tr}\left( \Delta_t (P_{t-1} K_t K_t^\top P_{t-1}) \Delta_t^\top \right) \\
&\quad - 2\operatorname{Tr}\left( R_t K_t^\top P_{t-1} \Delta_t^\top \right) \\
&\quad + \lambda \operatorname{Tr}\left( \Delta_t \Sigma \Delta_t^\top \right).
\end{aligned}
\end{equation}

\subsection{Optimization and Closed-Form Solution}
We compute the gradient of $\mathcal{J}(\Delta_t)$ with respect to $\Delta_t$. Using the matrix calculus identities $\frac{\partial \operatorname{Tr}(X C X^\top)}{\partial X} = X(C + C^\top)$ and $\frac{\partial \operatorname{Tr}(D X^\top)}{\partial X} = D$, and noting that both $(P_{t-1} K_t K_t^\top P_{t-1})$ and $\Sigma$ are symmetric:
\begin{equation}
\begin{aligned}
\frac{\partial \mathcal{J}(\Delta_t)}{\partial \Delta_t} &= 2 \Delta_t (P_{t-1} K_t K_t^\top P_{t-1}) \\
&\quad - 2 R_t K_t^\top P_{t-1} + 2 \lambda \Delta_t \Sigma.
\end{aligned}
\end{equation}
Setting the gradient to zero to find the optimal $\Delta_t$:
\begin{equation}
\Delta_t (P_{t-1} K_t K_t^\top P_{t-1}) + \lambda \Delta_t \Sigma = R_t K_t^\top P_{t-1}.
\end{equation}
Factoring out $\Delta_t$ on the left yields:
\begin{equation}
\Delta_t \left( P_{t-1} K_t K_t^\top P_{t-1} + \lambda \Sigma \right) = R_t K_t^\top P_{t-1}.
\end{equation}
Let $M_t := P_{t-1} K_t K_t^\top P_{t-1} + \lambda \Sigma$. Since $P_{t-1} K_t K_t^\top P_{t-1}$ is positive semi-definite and $\lambda \Sigma$ is strictly positive definite, their sum $M_t$ is strictly positive definite and thus invertible. Right-multiplying by $M_t^{-1}$ gives the final closed-form solution:
\begin{equation}
\Delta_t = R_t K_t^\top P_{t-1} \left( P_{t-1} K_t K_t^\top P_{t-1} + \lambda \Sigma \right)^{-1}.
\end{equation}

\section{Proofs of Theoretical Propositions}
\label{app:spectral_derivation}

In this section we provide rigorous proofs of the propositions stated in Section~\ref{sec:theoretical_advantages}, thereby establishing the theoretical advantages of our method.

\subsection{Proof of Proposition~\ref{prop:leverage}}
\label{app:leverage_proof}
We consider the constrained problem
\begin{equation}
\label{eq:leverage_minimization}
\min_{\Delta \in \mathbb{R}^{m \times n}} \; \|\Delta\, \Sigma^{1/2}\|_F^2
\quad \text{s.t.} \quad \Delta\, k_t = R.
\end{equation}
Since
$\|\Delta \Sigma^{1/2}\|_F^2 = \operatorname{Tr}(\Delta \Sigma \Delta^\top) = \mathbb{E}_{k}\bigl[\|\Delta k\|^2\bigr]$,
where $k$ follows the distribution of pre-trained keys from which $\Sigma$ is estimated,
Equation~\ref{eq:leverage_minimization} measures the minimum perturbation of pre-trained outputs required to realize the output displacement $R$ at key $k_t$.

Introducing a vector Lagrange multiplier $\mu \in \mathbb{R}^{m}$,
\begin{equation}
\mathcal{L}(\Delta, \mu) = \|\Delta\, \Sigma^{1/2}\|_F^2 - 2\, \mu^\top \bigl(\Delta\, k_t - R\bigr).
\end{equation}
Setting $\partial \mathcal{L} / \partial \Delta = 2\Delta\, \Sigma - 2\,\mu\, k_t^\top = 0$ gives
\begin{equation}
\Delta^\star = \mu\, k_t^\top \Sigma^{-1}.
\end{equation}
Substituting into the constraint $\Delta^\star k_t = R$,
\begin{equation}
\mu\, (k_t^\top \Sigma^{-1} k_t) = R \quad \Longrightarrow \quad \mu = \frac{R}{k_t^\top \Sigma^{-1} k_t}.
\end{equation}
The optimal update is therefore
\begin{equation}
\Delta^\star = \frac{R\, k_t^\top \Sigma^{-1}}{k_t^\top \Sigma^{-1} k_t}.
\end{equation}
Plugging this back into the objective and using $\operatorname{Tr}(R^\top R) = \|R\|^2$ together with $k_t^\top \Sigma^{-1} \Sigma\, \Sigma^{-1} k_t = k_t^\top \Sigma^{-1} k_t$,
\begin{equation}
\begin{aligned}
\|\Delta^\star \Sigma^{1/2}\|_F^2
&= \operatorname{Tr}\!\bigl( \Sigma^{1/2} (\Delta^\star)^\top \Delta^\star \Sigma^{1/2} \bigr) \\
&= \frac{\operatorname{Tr}\!\bigl( R^\top R \bigr)\, k_t^\top \Sigma^{-1} k_t}{(k_t^\top \Sigma^{-1} k_t)^2} \\
&= \frac{\|R\|^2}{k_t^\top \Sigma^{-1} k_t}.
\end{aligned}
\end{equation}
The function $L \mapsto \|R\|^2 / L$ is strictly decreasing on $L > 0$, so the minimum energy-weighted cost is a strictly monotonically decreasing function of the leverage score $k_t^\top \Sigma^{-1} k_t$. This rigorously formalizes the intuition that high-leverage semantic pivots realize the target displacement with lower unintended perturbation, enabling precise editing at low cost.

\subsection{Proof of Proposition~\ref{prop:soft}}
\label{app:soft_proof}

We prove Proposition~\ref{prop:soft} in two steps. We first reduce the closed-form solution (Equation~\ref{eq:closed_form}) to a single-key spectral form via the Sherman--Morrison identity, then project the result into the eigenbasis of $\Sigma$ to read off the per-direction update allocation $\Delta W_{\text{dir}\ j} \propto \tilde{k}_j / \lambda_j$.

\paragraph{Single-Key Reduction via Sherman--Morrison.}
We begin from the closed-form solution derived in Appendix~\ref{app:derivation}:
\begin{equation}
\Delta_t = R_t K_t^\top P_{t-1} \bigl( P_{t-1} K_t K_t^\top P_{t-1} + \lambda \Sigma \bigr)^{-1}.
\end{equation}
Consider the single-key setting where $K_t$ collapses to a column vector $k_t \in \mathbb{R}^n$. Using the symmetry $P_{t-1}^\top = P_{t-1}$ and defining the projected key $\hat{k}_t := P_{t-1} k_t$, we have
\begin{equation}
K_t^\top P_{t-1} = \hat{k}_t^\top, \quad P_{t-1} K_t K_t^\top P_{t-1} = \hat{k}_t \hat{k}_t^\top,
\end{equation}
so the closed form reduces to
\begin{equation}
\label{eq:closed_form_single}
\Delta_t = R_t \hat{k}_t^\top \bigl( \hat{k}_t \hat{k}_t^\top + \lambda \Sigma \bigr)^{-1}.
\end{equation}
The inverse in Equation~\ref{eq:closed_form_single} can be evaluated analytically via the \textbf{Sherman--Morrison identity}: for any invertible $A$ and vectors $u, v$,
\begin{equation}
(A + u v^\top)^{-1} = A^{-1} - \frac{A^{-1} u v^\top A^{-1}}{1 + v^\top A^{-1} u}.
\end{equation}
Setting $A = \lambda \Sigma$ and $u = v = \hat{k}_t$, and noting $(\lambda \Sigma)^{-1} = \lambda^{-1} \Sigma^{-1}$,
\begin{equation}
\begin{aligned}
&\bigl( \lambda \Sigma + \hat{k}_t \hat{k}_t^\top \bigr)^{-1} \\
&= \tfrac{1}{\lambda} \Sigma^{-1} - \frac{\tfrac{1}{\lambda^2} \Sigma^{-1} \hat{k}_t \hat{k}_t^\top \Sigma^{-1}}{1 + \tfrac{1}{\lambda} \hat{k}_t^\top \Sigma^{-1} \hat{k}_t}.
\end{aligned}
\end{equation}
Left-multiplying by $\hat{k}_t^\top$ and introducing the scalar $s := \hat{k}_t^\top \Sigma^{-1} \hat{k}_t$,
\begin{equation}
\begin{aligned}
&\hat{k}_t^\top \bigl( \hat{k}_t \hat{k}_t^\top + \lambda \Sigma \bigr)^{-1} \\
&= \frac{\hat{k}_t^\top \Sigma^{-1}}{\lambda} - \frac{\tfrac{s}{\lambda^2} \hat{k}_t^\top \Sigma^{-1}}{1 + \tfrac{s}{\lambda}} \\
&= \frac{\hat{k}_t^\top \Sigma^{-1}}{\lambda} \left( 1 - \frac{s/\lambda}{1 + s/\lambda} \right) \\
&= \frac{\hat{k}_t^\top \Sigma^{-1}}{\lambda + s}.
\end{aligned}
\end{equation}
Multiplying by $R_t$ on the left and substituting $s = \hat{k}_t^\top \Sigma^{-1} \hat{k}_t$ yields the compact spectral form:
\begin{equation}
\label{eq:spectral_update}
\Delta_t = \frac{R_t \hat{k}_t^\top \Sigma^{-1}}{\lambda + \hat{k}_t^\top \Sigma^{-1} \hat{k}_t}.
\end{equation}

\paragraph{Per-Direction Allocation in the Eigenbasis of $\Sigma$.}
We now project Equation~\ref{eq:closed_form_single} into the eigenspace of $\Sigma$ to derive the per-direction update allocation. Since $\Sigma$ is symmetric positive-definite, it admits the eigendecomposition $\Sigma = \Psi \Lambda \Psi^\top$ with $\Psi = [\psi_1, \dots, \psi_n]$ an orthonormal basis and $\Lambda = \operatorname{diag}(\lambda_1, \dots, \lambda_n)$. Correspondingly,
\begin{equation}
\Sigma^{-1} = \sum_{l=1}^n \frac{1}{\lambda_l} \psi_l \psi_l^\top.
\end{equation}
Defining $\tilde{k}_l := \psi_l^\top \hat{k}_t$ as the projection of the effective key onto the $l$-th principal direction, substituting into the vector factor $\hat{k}_t^\top \Sigma^{-1}$ gives
\begin{equation}
\label{eq:hk_sigma_inv_expand}
\hat{k}_t^\top \Sigma^{-1} = \sum_{l=1}^n \frac{\hat{k}_t^\top \psi_l}{\lambda_l} \psi_l^\top = \sum_{l=1}^n \frac{\tilde{k}_l}{\lambda_l} \psi_l^\top.
\end{equation}
To read off the contribution along the $j$-th principal direction, we right-multiply Equation~\ref{eq:hk_sigma_inv_expand} by $\psi_j$ and apply the orthonormality $\psi_l^\top \psi_j = \delta_{lj}$:
\begin{equation}
\bigl( \hat{k}_t^\top \Sigma^{-1} \bigr) \psi_j = \sum_{l=1}^n \frac{\tilde{k}_l}{\lambda_l} (\psi_l^\top \psi_j) = \frac{\tilde{k}_j}{\lambda_j}.
\end{equation}
Since the scalar prefactor $R_t / (\lambda + s)$ in Equation~\ref{eq:spectral_update} is independent of $j$, the update magnitude along $\psi_j$ is proportional to
\begin{equation}
\label{eq:spectral_allocation}
\Delta W_{\text{dir}\ j} \propto \frac{\tilde{k}_j}{\lambda_j}.
\end{equation}
The numerator $\tilde{k}_j$ quantifies the \emph{semantic demand} of the edit along direction $\psi_j$, while the denominator $\lambda_j$ reflects the \emph{inertia} ( statistical energy) of the pre-trained manifold along that direction. Their ratio formalizes the principle of \emph{impedance-matched editing}: editing energy is channeled into directions that are simultaneously relevant to the target knowledge and structurally under-utilized by the pre-trained model. This establishes Proposition~\ref{prop:soft}: the closed-form update is energy-adaptive in the eigenbasis of $\Sigma$, allocating more capacity to under-utilized low-energy directions and less to dominant high-energy directions.

\paragraph{Remark: the general multi-key case.}
The single-key assumption yields the scalar form above, but it is not essential for the energy-adaptive behavior. For a general key matrix, write $A_t := P_{t-1} K_t \in \mathbb{R}^{n \times b}$, so that the closed-form solution reads $\Delta_t = R_t A_t^\top (A_t A_t^\top + \lambda \Sigma)^{-1}$. Applying the Woodbury identity to $(\lambda \Sigma + A_t A_t^\top)^{-1}$ and using $I - G_t (\lambda I + G_t)^{-1} = \lambda (\lambda I + G_t)^{-1}$ with $G_t := A_t^\top \Sigma^{-1} A_t$ gives
\begin{equation}
\label{eq:closed_form_multikey}
\Delta_t = R_t \bigl( \lambda I + G_t \bigr)^{-1} A_t^\top \Sigma^{-1},
\end{equation}
which reduces to Equation~\ref{eq:spectral_update} when $b = 1$. Since $\Sigma^{-1} \psi_j = \psi_j / \lambda_j$, the per-direction allocation becomes
\begin{equation}
\Delta_t \psi_j = \frac{1}{\lambda_j} \, R_t \bigl( \lambda I + G_t \bigr)^{-1} A_t^\top \psi_j,
\end{equation}
where the matrix factor $R_t (\lambda I + G_t)^{-1}$ does not depend on $j$. The $1 / \lambda_j$ scaling therefore persists in the general case, with the scalar semantic demand $\tilde{k}_j$ replaced by the vector $A_t^\top \psi_j$.

\subsection{Proof of Proposition~\ref{prop:hard}}
\label{app:preservation_proof}

\paragraph{Recursive Definition of $P_t$.}
Setting $\epsilon = 0$ and replacing the inverse with the Moore--Penrose pseudo-inverse, the recursion reads
\begin{equation}
\label{eq:Pt_recursion}
P_t = P_{t-1} - P_{t-1} K_t \bigl(K_t^\top P_{t-1} K_t\bigr)^{+} K_t^\top P_{t-1}, \qquad P_0 = I.
\end{equation}
The pseudo-inverse coincides with the ordinary inverse whenever $K_t^\top P_{t-1} K_t$ is invertible. Proposition~\ref{prop:hard} concerns this idealized recursion; in practice we use $\epsilon = 10^{-6}$ to keep the matrix invertible.

\begin{proof}[Proof of Proposition~\ref{prop:hard}]
We prove by induction on $t \ge 0$ that $P_t$ is symmetric and idempotent and that $P_t K_i = 0$ for all $i \le t$. In the base case, $P_0 = I$ is symmetric and idempotent, and the condition on historical keys is vacuous.

\textbf{Inductive step.} Assume the claim holds for $t-1$ and write $A_t := P_{t-1} K_t$ and $\Pi_t := A_t (A_t^\top A_t)^{+} A_t^\top$. Since $P_{t-1}$ is symmetric and idempotent, $K_t^\top P_{t-1} K_t = A_t^\top A_t$ and $P_{t-1} K_t (K_t^\top P_{t-1} K_t)^{+} K_t^\top P_{t-1} = \Pi_t$, so $P_t = P_{t-1} - \Pi_t$. By the properties of the pseudo-inverse, $\Pi_t$ is the orthogonal projector onto $\mathrm{range}(A_t)$, and hence $\Pi_t A_t = A_t$.

\emph{Projector property.} Since $P_{t-1} A_t = P_{t-1}^2 K_t = A_t$, we have $P_{t-1} \Pi_t = \Pi_t$ and, by symmetry, $\Pi_t P_{t-1} = \Pi_t$. Therefore $P_t^\top = P_t$ and
$P_t^2 = P_{t-1}^2 - P_{t-1} \Pi_t - \Pi_t P_{t-1} + \Pi_t^2 = P_{t-1} - \Pi_t = P_t$.

\emph{Historical keys ($i < t$).} By the inductive hypothesis, $P_{t-1} K_i = 0$ and $A_t^\top K_i = K_t^\top P_{t-1} K_i = 0$. Hence $P_t K_i = P_{t-1} K_i - A_t (A_t^\top A_t)^{+} A_t^\top K_i = 0$.

\emph{Current key ($i = t$).} Since $A_t^\top K_t = K_t^\top P_{t-1} K_t = A_t^\top A_t$, we have $\Pi_t K_t = A_t (A_t^\top A_t)^{+} A_t^\top A_t = \Pi_t A_t = A_t$. Hence $P_t K_t = P_{t-1} K_t - \Pi_t K_t = A_t - A_t = 0$.

This completes the induction and proves (i).

\emph{Rank bound (ii).} Since $\mathrm{rank}(\Pi_t) = \mathrm{rank}(A_t) \le \mathrm{rank}(K_t)$, the subadditivity of rank gives $\mathrm{rank}(P_t) \ge \mathrm{rank}(P_{t-1}) - \mathrm{rank}(K_t)$. Unrolling this inequality from $\mathrm{rank}(P_0) = n$ yields $\mathrm{rank}(P_t) \ge n - \sum_{i=1}^{t} \mathrm{rank}(K_i)$.
\end{proof}

\begin{corollary}
  \label{cor:leakage}
  Let $S_{t-1} := \mathrm{span}\{K_i\}_{i<t}$ and adopt the setting of
  Proposition~\ref{prop:hard} ($\epsilon = 0$). Then the deployed update
  $\Delta_t = R_t F_t$ of Equation~\ref{eq:closed_form_multikey} satisfies
  \begin{equation}
  \mathrm{range}(\Delta_t^\top) \;\subseteq\; \Sigma^{-1}\bigl(S_{t-1}^{\perp}\bigr)
  \;=\; \bigl(\Sigma\, S_{t-1}\bigr)^{\perp},
  \end{equation}
  and its interference with any historical key block $K_i$ ($i<t$) is given exactly by
  \begin{equation}
  \label{eq:leak_identity}
  \Delta_t K_i \;=\; R_t (\lambda I + G_t)^{-1}\, K_t^\top
  \bigl[P_{t-1},\, \Sigma^{-1}\bigr] K_i ,
  \end{equation}
  where $[X,Y] := XY - YX$. Consequently, if $S_{t-1}$ is $\Sigma$-invariant
  (equivalently, spanned by eigenvectors of $\Sigma$) then $\Delta_t K_i = 0$
  exactly.
\end{corollary}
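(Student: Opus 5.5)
The plan starts from the multi-key form of the closed-form update, Equation~\ref{eq:closed_form_multikey}: $\Delta_t = R_t(\lambda I + G_t)^{-1}A_t^\top\Sigma^{-1}$ with $A_t = P_{t-1}K_t$. Transposing gives $\Delta_t^\top = \Sigma^{-1}A_t\,[(\lambda I+G_t)^{-1}]^\top R_t^\top$, so $\mathrm{range}(\Delta_t^\top)\subseteq \Sigma^{-1}\mathrm{range}(A_t)$.

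\textbf{Range claim.} By Proposition~\ref{prop:hard} with $\epsilon=0$, and more precisely by the induction in its proof, $P_{t-1}$ is a symmetric idempotent that annihilates every $K_i$ with $i<t$. Therefore $\mathrm{range}(P_{t-1}) = \ker(P_{t-1})^\perp \subseteq S_{t-1}^\perp$, and so $\mathrm{range}(A_t)\subseteq S_{t-1}^\perp$. This yields $\mathrm{range}(\Delta_t^\top)\subseteq\Sigma^{-1}(S_{t-1}^\perp)$.

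For the equality $\Sigma^{-1}(S^\perp) = (\Sigma S)^\perp$, take $y=\Sigma^{-1}x$ with $x\perp S$. For every $s\in S$ we have $\langle y,\Sigma s\rangle = \langle \Sigma y, s\rangle = \langle x,s\rangle = 0$, using the symmetry of $\Sigma$. This gives the inclusion $\subseteq$. The two subspaces have the same dimension $n-\dim S$, because $\Sigma$ is invertible, so the inclusion is an equality.

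\textbf{Interference identity.} We compute $\Delta_t K_i = R_t(\lambda I+G_t)^{-1}K_t^\top P_{t-1}\Sigma^{-1}K_i$, using $A_t^\top = K_t^\top P_{t-1}$ since $P_{t-1}$ is symmetric. Next we subtract the zero term $K_t^\top\Sigma^{-1}P_{t-1}K_i$, which vanishes because $P_{t-1}K_i=0$ for $i<t$. This turns $P_{t-1}\Sigma^{-1}$ into the commutator $[P_{t-1},\Sigma^{-1}]$ and gives Equation~\ref{eq:leak_identity}.

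\textbf{Consequence.} Suppose $S_{t-1}$ is $\Sigma$-invariant. Since $\Sigma$ is symmetric, $S_{t-1}^\perp$ is also invariant, and so both are invariant under $\Sigma^{-1}$. Here we need $\mathrm{range}(P_{t-1}) = S_{t-1}^\perp$ exactly, not merely the inclusion; that is, $P_{t-1}$ must be the orthogonal projector onto $S_{t-1}^\perp$. I would prove this by a separate induction showing $\ker P_t = \ker P_{t-1}\oplus \mathrm{range}(A_t) = S_{t-1}+\mathrm{span}(K_t)$. The key fact is that $K_t = A_t + (I-P_{t-1})K_t$, where $(I-P_{t-1})K_t\in S_{t-1}$. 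An orthogonal projector onto a subspace that is invariant under a symmetric matrix commutes with that matrix, so the commutator vanishes and $\Delta_t K_i=0$.

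The main obstacle is this identification of $\mathrm{range}(P_{t-1})$ with all of $S_{t-1}^\perp$. The pseudo-inverse recursion removes exactly $\mathrm{range}(A_t)$ at each step, and the dimension bookkeeping has to be checked carefully. Everything else in the argument is direct algebra.
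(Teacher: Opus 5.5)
Your range claim and interference identity follow the paper's proof essentially line for line. Your dimension count for $\Sigma^{-1}(S^\perp)=(\Sigma S)^\perp$ is a fine substitute for the paper's one-line appeal to $\langle \Sigma y, s\rangle=\langle y,\Sigma s\rangle$, which actually gives both inclusions at once as an iff-chain.

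You diverge only in the final consequence. The paper deduces it from the first claim alone. If $\Sigma S_{t-1}=S_{t-1}$, then $(\Sigma S_{t-1})^\perp=S_{t-1}^\perp$, so every row of $\Delta_t$ lies in $S_{t-1}^\perp$ and annihilates the columns of $K_i$, which lie in $S_{t-1}$. This needs only the inclusion $\mathrm{range}(P_{t-1})\subseteq S_{t-1}^\perp$, which you already have.

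You instead make the commutator vanish, and that forces you to prove the exact identification $\ker P_{t-1}=S_{t-1}$. Your inductive sketch of that is correct, so the ``main obstacle'' is not a real gap:
\begin{itemize}
\item $P_t=P_{t-1}-\Pi_t$ with $\Pi_t P_{t-1}=\Pi_t$ gives the orthogonal decomposition $\ker P_t=\ker P_{t-1}\oplus\mathrm{range}(A_t)$.
\item $K_t-A_t=(I-P_{t-1})K_t\in S_{t-1}$ then gives $S_{t-1}+\mathrm{range}(A_t)=S_t$.
\item There is no rank-deficiency issue, because $\Pi_t$ is the orthogonal projector onto $\mathrm{range}(A_t)$ whether or not $A_t^\top A_t$ is invertible.
\end{itemize}

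What your route buys is a stronger structural fact. $P_{t-1}$ is exactly the orthogonal projector onto $S_{t-1}^\perp$, so $[P_{t-1},\Sigma^{-1}]=0$ identically, not merely its compression $K_t^\top[P_{t-1},\Sigma^{-1}]K_i$.

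Even along your route, though, the extra lemma is avoidable. In the identity you only need $P_{t-1}\Sigma^{-1}K_i=0$. Invariance gives $\Sigma^{-1}S_{t-1}=S_{t-1}$, so the columns of $\Sigma^{-1}K_i$ lie in $S_{t-1}$. By linearity, $S_{t-1}\subseteq\ker P_{t-1}$ is already Proposition~\ref{prop:hard}(i).
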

  
\begin{proof}
  Since $G_t = A_t^\top \Sigma^{-1} A_t$ is symmetric, so is $(\lambda I + G_t)^{-1}$,
  and therefore $F_t^\top = \Sigma^{-1} A_t (\lambda I + G_t)^{-1}$. From
  $\Delta_t = R_t F_t$ we get $\Delta_t^\top = F_t^\top R_t^\top$, hence
  $\mathrm{range}(\Delta_t^\top) \subseteq \mathrm{range}(F_t^\top)
  \subseteq \Sigma^{-1}\,\mathrm{range}(A_t)$.
  As $A_t = P_{t-1}K_t$ we have $\mathrm{range}(A_t) \subseteq \mathrm{range}(P_{t-1})$.
  Moreover $P_{t-1}$ is symmetric (as shown in the proof of
  Proposition~\ref{prop:hard}), so Proposition~\ref{prop:hard}(i) gives
  $K_i^\top P_{t-1} = (P_{t-1} K_i)^\top = 0$ for every $i < t$; hence every vector in
  $\mathrm{range}(P_{t-1})$ is orthogonal to $S_{t-1}$, that is,
  $\mathrm{range}(P_{t-1}) \subseteq S_{t-1}^{\perp}$.
  The identification
  $\Sigma^{-1}(S_{t-1}^{\perp}) = (\Sigma S_{t-1})^{\perp}$ follows from
  $\langle \Sigma x, s\rangle = \langle x, \Sigma s\rangle$.
  For Equation~\ref{eq:leak_identity}, substitute
  $P_{t-1}\Sigma^{-1} = \Sigma^{-1}P_{t-1} + [P_{t-1},\Sigma^{-1}]$ into
  $\Delta_t K_i = R_t(\lambda I + G_t)^{-1} K_t^\top P_{t-1}\Sigma^{-1}K_i$ and use
  $P_{t-1}K_i = 0$. If $\Sigma S_{t-1} = S_{t-1}$ then
  $(\Sigma S_{t-1})^{\perp} = S_{t-1}^{\perp}$ and the first claim already forces
  $\Delta_t K_i = 0$.
\end{proof}

Corollary~\ref{cor:leakage} makes precise what recursive null-space alignment does and does not
  guarantee. The recursion of Equation~\ref{eq:recursive_null_space_alignment}
  builds a \emph{Euclidean} orthogonal projector, so it enforces
  $K_i^\top (P_{t-1}K_t) = 0$ for every $i<t$; the interference of the deployed update, however, is
  measured in the $\Sigma^{-1}$ inner product, $K_i^\top \Sigma^{-1}(P_{t-1}K_t)$.
  The two coincide when $S_{t-1}$ is $\Sigma$-invariant, and the residual
  leakage is therefore attributable entirely to the anisotropy of $\Sigma$. Section~\ref{app:leakage}
  measures the resulting leakage and finds it small
  ($\mathrm{leak}_t = 0.305$ at the median, a $29.6\times$ reduction over
  $P_{t-1} = I$), consistent with the historical key subspace being close to, but
  not exactly, $\Sigma$-invariant.

\section{Extended Related Work}
\label{app:related_work}

Knowledge editing aims to precisely modify specific knowledge encoded in Large Language Models (LLMs) while preserving unrelated capabilities. The existing literature is primarily divided into memory-based methods and model editing methods.

\subsection{Memory-based Knowledge Editing}
Memory-based approaches maintain an external repository of edited facts to avoid direct parameter interference. These can be further categorized by the nature of the stored memory:
\paragraph{Explicit Memory.} Early methods like IKE~\cite{zheng2023edit} leverage In-Context Learning (ICL) by appending demonstrations to the prompt. SERAC~\cite{mitchell2022memory} utilizes a memory bank and a classifier to route queries to an additional model when an edit is triggered. 
\paragraph{Implicit Memory.} GRACE~\cite{hartvigsen2023aging} maintains a codebook at a specific model layer, using previous layer outputs as keys and learnable vectors (trained on target answers) as values to replace the original layer output.

\paragraph{Parametric Memory.} This category introduces additional parameter modules without altering the base weights. MELO~\cite{yu2023melo} trains chunk-wise LoRA matrices based on edit clusters, while Transformer-Patcher~\cite{huang2023transformer} adds new neurons to the final FFN layer to fix individual errors. 
A range of subsequent works further explores augmenting language models with lightweight calibration, intervention, or memory modules attached to specific layers to incorporate editable knowledge~\cite{dong2022calibrating,wang2026memoir,zhanglanguage,liu2026representation}.

\paragraph{Limitations of Memory-based Methods.}
Relying heavily on retrieval accuracy, these methods struggle with generalized or paraphrased queries targeting edited facts. Furthermore, their dependence on external memory introduces fundamental bottlenecks in scalability, incurring continuously growing storage requirements and escalating inference-time computational overhead. In addition, this reliance tightly couples the model to its external memory infrastructure, severely constraining deployment flexibility and making cross-platform portability impractical.

\subsection{Model Editing Methods}
Model editing seeks to permanently ``bake'' knowledge into the model's parameters. This can be achieved through hyper-networks, localized optimization, or global training.

\paragraph{Hypernetwork-based Paradigm.} This family of methods trains a hyper-network to produce targeted parameter modifications that encode edited knowledge. For example, MEND~\cite{mitchell2022fast} and KnowledgeEditor~\cite{de-cao2021editing} learn to predict weight deltas from gradients derived from new factual triples. However, these approaches introduce additional parameters and training overhead. Moreover, since hyper-networks remain frozen after pretraining, they struggle to accommodate
continual edits, which significantly limits their practical applicability. 

\paragraph{Locate-then-Edit Paradigm.} This paradigm identifies critical ``knowledge neurons'' before applying targeted updates. KN~\cite{dai2022knowledge} identifies knowledge neurons via attribution and updates specific FFN values. ROME~\cite{meng2023locating} uses causal tracing to locate a single layer and updates its weights using a closed-form solution. MEMIT~\cite{meng2023mass} extends ROME to handle large-scale edits by distributing the update across multiple layers. Further extensions include MEMITCSK~\cite{gupta2023editing} (considering predicates), PMET~\cite{li2024pmet} (including MHSA layers), and DEM~\cite{huang2024commonsense} (leveraging hidden state similarity for localization). UltraEdit~\cite{gu2026ultraedit} performs editing without pre-computation of the second-moment matrix for the original knowledge representations.

To mitigate sequential decay and maintain model stability during continual editing, recent works have explored advanced constraints and weight matrix properties. AlphaEdit~\cite{fang2025alphaedit} introduces a null-space projection to ensure that updates do not interfere with pre-trained features, while EvoEdit~\cite{lyu2025evoedit} employs a dynamic projection matrix to protect historically edited knowledge. 
Unlike these approaches, our recursive null-space alignment does not rely on pre-trained initialization, thereby avoiding the collapse of the feasible-update space and enabling more effective updates.

Some works focus on the properties of the weight matrices, employing regularization to constrain the parameter updates. NAS~\cite{liu2026ultralonghorizon} and MPES~\cite{gupta2025lifelong_regularization} identify that catastrophic forgetting in sequential editing is closely tied to the unconstrained exponential growth of the weight matrix norm; NAS addresses this by constraining the target value vector's norm, whereas MPES introduces a regularization term to penalize deviations from the original weights. Furthermore, REVIVE~\cite{zhang2026spectral_sequential} observes that general semantic knowledge is stored in the principal singular space of the parameters, protecting it by filtering out updates in this space. PRUNE~\cite{ma2025perturbation_restrained} notes that sequential editing increases the condition number of weight matrices, thereby degrading model capacity; it mitigates this by suppressing the singular values of the cumulative edit updates rather than individual ones. Finally, MOSE~\cite{xu2026multiplicative_orthogonal} replaces the standard additive update with a multiplicative orthogonal update matrix to better preserve matrix norms and condition numbers. In contrast to these approaches, which impose rigid constraints on weight norms or condition numbers, our energy-weighted penalty preserves manifold structure, naturally and effectively safeguarding general model capabilities.

Traditional methods are limited to atomic triplets. Recent works have expanded editing to unstructured long-form text. UnKE~\cite{deng2025everything} treats the model as a key-encoder and value-generator, performing block-level localization and gradient-descent-based optimization.
FT-UKE~\cite{xiong2025finetuning} provides a training recipe that makes vanilla fine-tuning effective for unstructured knowledge editing. EvoEdit-Cao~\cite{cao2025evoedit} addresses lifelong editing through latent perturbation and a knowledge-driven parameter fusion mechanism. 
To adapt the locate-then-edit paradigm to unstructured long text, AnyEdit~\cite{jiang2025anyedit} decomposes long-form text into uniform chunks and iteratively edits the last token of each chunk. $\mu$KE~\cite{su2025muke} further introduces a Matryoshka-style loss to capture multi-chunk dependencies, while AnyEdit++~\cite{tian2026anyeditpp} instead places chunk boundaries at peaks of Bayesian Surprise. Instead of heuristically segmenting input samples, which yields the mediocre-point dilemma, our Pivot Localization identifies easily displaceable regions on the edit sub-manifold, therefore enabling more effective unstructured knowledge updates.

\vspace{-5pt}
\section{Limitations}

Despite ManiEdit's strong performance in sequential unstructured knowledge editing, it still faces a key limitation: ManiEdit is currently confined to textual knowledge editing and lacks support for multimodal integration. Extending our manifold-based paradigm to accommodate cross-modal updates (e.g., text, images, and audio) would significantly broaden its applicability, presenting a promising direction for future research in multimodal large language models.

\end{document}